\documentclass{article}


\usepackage[nonatbib,final]{neurips_2021}
\usepackage[numbers,sort&compress]{natbib}




\usepackage{times}
\usepackage{soul}
\usepackage{url}
\urlstyle{same}
\usepackage[utf8]{inputenc}
\usepackage[small]{caption}
\usepackage{subcaption}
\usepackage{graphicx}
\usepackage{amsmath}
\usepackage{amsfonts}
\usepackage{amsthm}
\usepackage{booktabs}
\usepackage{algorithm}
\usepackage[noend]{algpseudocode}
\usepackage{longtable}
\usepackage{multicol}
\usepackage{bbm}
\usepackage{bm}
\usepackage{siunitx}
\usepackage{mathtools}
\usepackage{etoolbox}
\usepackage[T1]{fontenc} 
\usepackage{enumitem}
\usepackage{multirow}
\usepackage{wrapfig}
\usepackage{nicefrac}

\newtheorem{theorem}{Theorem}
\newtheorem{definition}{Definition}
\newtheorem{lemma}{Lemma}

\newcommand{\E}{\mathbb{E}}

\newcommand{\w}{\ensuremath{\theta_t}}
\newcommand{\wnext}{\ensuremath{\theta_{t+1}}}
\newcommand{\wopt}{\ensuremath{\theta^*}}

\DeclareMathOperator*{\argmax}{arg\,max}
\newcommand\norm[1]{\left\lVert#1\right\rVert}

\newcommand{\bcc}[1]{\left\{{#1}\right\}}
\newcommand{\brr}[1]{\left({#1}\right)}
\newcommand{\bss}[1]{\left[{#1}\right]}
\newcommand{\ipp}[2]{\left\langle{#1},{#2}\right\rangle}

\newcommand{\Expectover}[2]{\mathbb{E}_{#1}\!\left[#2\right]}
\newcommand{\abs}[1]{\left\vert#1\right\vert}

\newcommand{\Prob}[1]{\mathbb{P}\bcc{{#1}}}

\usepackage[usenames,dvipsnames]{xcolor}
\usepackage{hyperref}
\hypersetup{ %
	pdfauthor={},
	pdftitle={},
	pdfsubject={},
    pdflang={en},
	bookmarks,
	bookmarksnumbered,
	backref=page,
	pageanchor=true,
	plainpages=false,
	colorlinks=true,%
	linktocpage=true,
	citecolor=MidnightBlue,%
	filecolor=BrickRed,%
	linkcolor=NavyBlue,%
	urlcolor=NavyBlue 
}


\newcommand{\irlmodel}{MaxEnt-IRL}
\newcommand{\bcmodel}{CrossEnt-BC}
\newcommand{\cur}{\textsc{Cur}}

\newcommand{\agn}{\textsc{Agn}}
\newcommand{\omn}{\textsc{Omn}}
\newcommand{\bbox}{\textsc{BBox}}
\newcommand{\scot}{\textsc{Scot}}
\newcommand{\curl}{\textsc{Cur-L}}
\newcommand{\curt}{\textsc{Cur-T}}
\newcommand{\startstate}{s_{t}}

\title{Curriculum Design for Teaching via Demonstrations: Theory and Applications}


\author{
  \textbf{Gaurav Yengera}\textsuperscript{$1$,$2$} \quad  \quad
  \textbf{Rati Devidze}\textsuperscript{$1$} \quad \ \ 
  \textbf{Parameswaran Kamalaruban}\textsuperscript{$3$} \quad   
  \textbf{Adish Singla}\textsuperscript{$1$}
  \\
  \small{gyengera@mpi-sws.org} \quad
  \small{rdevidze@mpi-sws.org} \quad  \quad 
  \small{kparameswaran@turing.ac.uk} \quad \ \ 
  \small{adishs@mpi-sws.org}
  \\
  \\
  \textsuperscript{$1$}Max Planck Institute for Software Systems (MPI-SWS), Saarbrucken, Germany\\
  \textsuperscript{$2$}Saarland University, Saarland Informatics Campus (SIC), Saarbrucken, Germany\\  
  \textsuperscript{$3$}The Alan Turing Institute, London, UK
}

\begin{document}

\maketitle

\newtoggle{longversion}
\settoggle{longversion}{true}

\vspace{-2mm}
\begin{abstract}
\looseness-1We consider the problem of teaching via demonstrations in sequential decision-making settings. In particular, we study how to design a personalized curriculum over demonstrations to speed up the learner's convergence.~We provide a unified curriculum strategy for two popular learner models: Maximum Causal Entropy Inverse Reinforcement Learning (\irlmodel) and Cross-Entropy Behavioral Cloning (\bcmodel). Our unified strategy induces a ranking over demonstrations based on a notion of difficulty scores computed w.r.t. the teacher's optimal policy and the learner's current policy. Compared to the state of the art, our strategy doesn't require access to the learner's internal dynamics and still enjoys similar convergence guarantees under mild technical conditions. Furthermore, we adapt our curriculum strategy to the setting where no teacher agent is present using task-specific difficulty scores. Experiments on a synthetic car driving environment and navigation-based environments demonstrate the effectiveness of our curriculum strategy.
\end{abstract}

\section{Introduction}
\label{sec:introduction}

Imitation learning is a paradigm in which a learner acquires a new set of skills by imitating a teacher's behavior. The importance of imitation learning is realized in real-world applications where the desired behavior cannot be explicitly defined but can be demonstrated easily. These applications include the settings involving both human-to-machine interaction~\cite{Schaal1997,argall2009survey,kober2013reinforcement,cakmak2014eliciting}, and human-to-human interaction~\cite{buchsbaum2011children,shafto2014rational}. The two most popular approaches to imitation learning are Behavioral Cloning (BC)~\cite{bc1991} and Inverse Reinforcement Learning (IRL)~\cite{russell1998learning}. BC algorithms aim to directly match the behavior of the teacher using supervised learning methods. IRL algorithms operate in a two-step approach: first, a reward function explaining the teacher's behavior is inferred; then, the learner adopts a policy corresponding to the inferred reward. 

In the literature, imitation learning has been extensively studied from the learner's point of view to design efficient learning algorithms~\cite{abbeel2004,ziebart2008,boularias2011relative,wulfmeier2015maximum,finn2016guided,dorsaactive2017,osa2018algorithmic}. However, much less work is done from the teacher's point of view to reduce the number of demonstrations required to achieve the learning objective. In this paper, we focus on the problem of Teaching via Demonstrations (TvD), where a helpful teacher assists the imitation learner in converging quickly by designing a personalized curriculum~\cite{walsh2012dynamic,cakmak2012algorithmic,brown2019machine,lopes2009active,amin2017repeated}. Despite a substantial amount of work on curriculum design for reinforcement learning agents~\cite{svetlik2017automatic,florensa2017reverse,florensa2018automatic,czarnecki2018mix,narvekar2019learning,sukhbaatar2018intrinsic,riedmiller2018learning}, curriculum design for imitation learning agents is much less investigated. 

\looseness-1Prior work on curriculum design for IRL learners has focused on two concrete settings: non-interactive and interactive. In the non-interactive setting~\cite{cakmak2012algorithmic,brown2019machine}, the teacher provides a near-optimal set of demonstrations as a single batch. These curriculum strategies do not incorporate any feedback from the learner, hence unable to adapt the teaching to the learner's progress. In the interactive setting~\cite{imt_ijcai2019}, the teacher can leverage the learner's progress to adaptively choose the next demonstrations to accelerate the learning process. However, the existing state-of-the-art work~\cite{imt_ijcai2019} has proposed interactive curriculum algorithms that are based on learning dynamics of a specific IRL learner model (i.e., the learner's gradient update rule); see further discussion in Section~\ref{subsec:detailed-comparison}. In contrast, we focus on designing an interactive curriculum algorithm with theoretical guarantees that is agnostic to the learner's dynamics. This will enable the algorithm to be applicable for a broad range of learner models, and in practical settings where the learner's internal model is unknown (such as tutoring systems with human learners). A detailed comparison between our curriculum algorithm and the prior state-of-the-art algorithms from \cite{brown2019machine,imt_ijcai2019} is presented in Section \ref{subsec:detailed-comparison}.

Our approach is motivated by works on curriculum design for supervised learning and reinforcement learning algorithms that use a ranking over the training examples using a difficulty score~\cite{elman1993learning,bengio2009curriculum,zaremba2014learning,weinshall2018curriculum,weinshall2018theory,asada1996purposive,wu2016training}. In particular, our work is inspired by theoretical results on curriculum learning for linear regression models~\cite{weinshall2018curriculum}. We define difficulty scores for any demonstration based on the teacher's optimal policy and the learner's current policy. We then study the differential effect of the difficulty scores on the learning progress for two popular imitation learners: Maximum Causal Entropy Inverse Reinforcement Learning (\irlmodel)~\cite{ziebart2008} and Cross-Entropy loss-based Behavioral Cloning (\bcmodel)~\cite{bain1995framework}. Our main contributions are as follows:\footnote{Github repo: \url{https://github.com/adishs/neurips2021_curriculum-teaching-demonstrations_code}.}
\begin{enumerate}[parsep=0.5pt, leftmargin=*,labelindent=0.5pt]
    \item \looseness-1Our analysis for both \irlmodel~and \bcmodel~learners leads to a unified curriculum strategy, i.e., a preference ranking over demonstrations. This ranking is obtained based on the ratio between the demonstration's likelihood under the teacher's optimal policy and the learner's current policy. Experiments on a synthetic car driving environment validate our curriculum strategy.
    \item \looseness-1For the {\irlmodel} learner, we prove that our curriculum strategy achieves a linear convergence rate (under certain mild technical conditions), notably without requiring access to the learner's dynamics.
    \item \looseness-1We adapt our curriculum strategy to the learner-centric setting where a teacher agent is not present through the use of task-specific difficulty scores. As a proof of concept, we show that our strategy accelerates the learning process in synthetic navigation-based environments.
\end{enumerate}

\subsection{Comparison to Existing Approaches on Curriculum Design for Imitation Learning}
\label{subsec:detailed-comparison}

\looseness-1In the non-interactive setting, \cite{brown2019machine} have proposed a batch teaching algorithm (\scot) by showing that the teaching problem can be formulated as a set cover problem. In contrast, our algorithm is interactive in nature and hence, can leverage the learner's progress (see experimental results in Section~\ref{sec:experiments}).

In the interactive setting, \cite{imt_ijcai2019} have proposed the Omniscient algorithm (\omn) based on the iterative machine teaching (IMT) framework~\cite{liu2017iterative}. Their algorithm obtains strong convergence guarantees for the \irlmodel~learner model; however, requires \emph{exact} knowledge of the learner's dynamics (i.e, the learner's update rule). Our algorithm on the other hand is agnostic to the learner's dynamics and is applicable to a broader family of learner models (see Sections \ref{sec:curr-teacher-theory} and \ref{sec:experiments}).

Also for the interactive setting, \cite{imt_ijcai2019} have proposed the Blackbox algorithm (\bbox) as a heuristic to apply the \omn~algorithm when the learner's dynamics are unknown---this makes the \bbox~algorithm more widely applicable than \omn. However, this heuristic algorithm is still based on the gradient functional form of the linear \irlmodel~learner model (see Footnote~\ref{foot:bbox}), and does not provide any convergence guarantees. In contrast, our algorithm is derived independent of any specific learner model and we provide a theoretical analysis of our algorithm for different learner models (see Theorems~\ref{prop:curr},~\ref{thm:teach-complexity},~and~\ref{prop:curr-il}). Another crucial difference is that the \bbox~algorithm requires access to the true reward function of the environment, which precludes it from being applied to learner-centric settings where no teacher agent is present. In comparison, our algorithm is applicable to learner-centric settings (see experimental results in Section~\ref{sec:experiment_without_teacher}).

\subsection{Additional Related Work on Curriculum Design and Teaching}
\label{appendix.relatedwork}

\paragraph{Curriculum design.}
Curriculum design for supervised learning settings has been extensively studied in the literature. Early works present the idea of designing a curriculum comprising of tasks with increasing difficulty to train a machine learning model~\cite{elman1993learning,bengio2009curriculum,zaremba2014learning}. However, these approaches require task-specific knowledge for designing heuristic difficulty measures. Recent works have tackled the problem of automating curriculum design~\cite{graves2017automated,jiang2018mentornet}. There is also an increasing interest in theoretically analyzing the impact of a curriculum (ordering) of training tasks on the convergence of supervised learner models~\cite{weinshall2018curriculum,zhou2018minimax,zhou2021curriculum}. In particular, our work builds on the idea of difficulty scores of the training examples studied in~\cite{weinshall2018curriculum}.

The existing results on curriculum design for sequential decision-making settings are mostly empirical in nature. Similar to the supervised learning settings, the focus on curriculum design for reinforcement learning settings has been shifted from hand-crafted approaches~\cite{asada1996purposive,wu2016training} to automatic methods~\cite{svetlik2017automatic,florensa2017reverse,florensa2018automatic,narvekar2019learning}. We refer the reader to a recent survey~\cite{narvekar2020curriculum} on curriculum design for reinforcement learning. The curriculum learning paradigm has also been studied in psychology literature~\cite{ho2016showing,ho2017intervention,ho2018effective,ho2019,ho2021communication}. One key aspect in these works has been to design algorithms that account for the pedagogical intentions of a teacher, which often aims to explicitly demonstrate specific skills rather than just provide an optimal demonstration for a task. We see our work as complementary to these.

\paragraph{Machine teaching.}
The algorithmic teaching problem considers the interaction between a teacher and a learner where the teacher's objective is to find an optimal training sequence to steer the learner towards a desired goal \cite{goldman1995complexity,liu2017iterative,DBLP:journals/corr/ZhuSingla18,yang2018cooperative}. Most of the work in machine teaching for supervised learning settings is on batch teaching where the teacher provides a batch of teaching examples at once without any adaptation. The question of how a teacher should adaptively select teaching examples for a learner has been addressed recently in supervised learning settings~\cite{melo2018interactive,pmlr-v80-liu18b,chen2018understanding,yeo2019classroom,Hunziker2019forgetful,Mansouri2019preference}.

Furthermore, \cite{walsh2012dynamic,cakmak2012algorithmic,brown2019machine,imt_ijcai2019,Huag2018features,Tschiatschek2019learneraware} have studied algorithmic teaching for sequential decision-making tasks. In particular, \cite{cakmak2012algorithmic,brown2019machine} have proposed batch teaching algorithms for an IRL agent, where the teacher decides the entire set of demonstrations to provide to the learner before any interaction. These teaching algorithms do not leverage any feedback from the learner. In contrast, as discussed in Section \ref{subsec:detailed-comparison}, \cite{imt_ijcai2019} have proposed interactive teaching algorithms (namely \omn~and \bbox) for an IRL agent, where the teacher takes into account how the learner progresses. The works of \cite{Huag2018features,Tschiatschek2019learneraware} are complementary to ours and study algorithmic teaching when the learner has a differet worldview than the teacher or has its own specific preferences.

\section{Formal Problem Setup}
\label{sec:problem_setup}

Here, we formalize our problem setting which is based on prior work on sequential teaching \cite{liu2017iterative,imt_ijcai2019}.

\paragraph{Environment.}
We consider an environment defined as a Markov Decision Process (MDP) $\mathcal{M} := \brr{\mathcal{S},\mathcal{A},\mathcal{T},\gamma,P_0,R^E}$, where the state and action spaces are denoted by $\mathcal{S}$ and $\mathcal{A}$, respectively. $\mathcal{T}: \mathcal{S} \times \mathcal{S} \times \mathcal{A} \rightarrow \bss{0,1}$ is the transition dynamics, $\gamma$ is the discounting factor, and $P_0: \mathcal{S} \rightarrow \bss{0,1}$ is an initial distribution over states $\mathcal{S}$. A policy $\pi: \mathcal{S} \times \mathcal{A} \rightarrow \bss{0,1}$ is a mapping from a state to a probability distribution over actions. The underlying reward function is given by $R^E: \mathcal{S} \times \mathcal{A} \rightarrow \mathbb{R}$. 

\paragraph{Teacher-learner interaction.} 
We consider a setting with two agents: a teacher and a sequential learner. The teacher has access to the full MDP $\mathcal{M}$ and has a \emph{target policy} $\pi^E$ (e.g., a near-optimal policy w.r.t. $R^E$). The learner knows the MDP $\mathcal{M}$ but not the reward function $R^E$, i.e., has only access to $\mathcal{M} \setminus R^E$. The teacher's goal is to provide an informative sequence of demonstrations to teach the policy $\pi^E$ to the learner. Here, a teacher's demonstration $\xi \small= \bcc{\brr{s_\tau^\xi,a_\tau^\xi}}_{\tau = 0,1,\dots}$ \normalsize is obtained by first choosing an initial state $s_0^\xi \in \mathcal{S}$ (where $P_0(s_0^\xi) > 0$) and then choosing a trajectory, sequence of state-action pairs, obtained by executing the policy $\pi^E$ in the MDP $\mathcal{M}$. The interaction between the teacher and the learner is formally described in Algorithm~\ref{alg:interaction}. For simplicity, we assume that the teacher directly observes the learner’s policy  $\pi_t^L$ at any time $t$. In practice, the teacher could approximately infer the policy $\pi_t^L$ by probing the learner and using Monte Carlo methods.

\paragraph{Generic learner model.} 
Here, we describe a generic learner update rule for Algorithm~\ref{alg:interaction}. Let $\Theta \subseteq \mathbb{R}^d$ be a parameter space. The learner searches for a policy in the following parameterized policy space: $\Pi_\Theta := \bcc{\pi_\theta : \mathcal{S} \times \mathcal{A} \rightarrow \bss{0,1}, \text{ where } \theta \in \Theta}$. For the policy search, the learner sequentially minimizes a loss function $\ell$ that depends on the policy parameter $\theta$ and the demonstration $\xi = \bcc{\brr{s_\tau^\xi,a_\tau^\xi}}_{\tau}$ provided by the teacher. More concretely, we consider $\ell \brr{\xi, \theta} := - \log \mathbb{P}\brr{\xi | \theta}$,
where
\small
$\mathbb{P}\brr{\xi | \theta} = P_0 (s_0^\xi) \cdot \prod_{\tau} \pi_{\theta}\brr{a_\tau^\xi | s_\tau^\xi} \cdot \mathcal{T} \brr{s_{\tau + 1}^\xi | s_\tau^\xi , a_\tau^\xi}$
\normalsize
is the likelihood (probability) of the demonstration $\xi$ under policy $\pi_\theta$ in the MDP $\mathcal{M}$. At time $t$, upon receiving a demonstration $\xi_t$ provided by the teacher, the learner performs the following online projected gradient descent update:
$\wnext ~\gets~ \mathrm{Proj}_\Theta \bss{\w - \eta_t g_t}$,   
where $\eta_t$ is the learning rate, and $g_t = \bss{\nabla_\theta \ell \brr{\xi_t , \theta}}_{\theta = \theta_t}$. Note that the parameter $\theta_1$ reflects the initial knowledge of the learner. Given the learner's current parameter $\theta_t$ at time $t$, the learner's policy is defined as $\pi^L_{t} := \pi_{\w}$.

\begin{algorithm}[t]
    \caption{Teacher-Learner Interaction}
    \begin{algorithmic}[1]
        \State \textbf{Initialization:} Initial knowledge of learner $\pi^L_1$.
        \For{$t = 1,2,\dots$}
            \State Teacher observes the learner's current policy $\pi^L_t$.
            \State Teacher provides demonstration $\xi_t$ to the learner.
            \State Learner updates its policy to $\pi^L_{t+1}$ using $\xi_t$.
        \EndFor{}
    \end{algorithmic}
    \label{alg:interaction}
\end{algorithm}

\paragraph{Teaching objective.}
For any policy $\pi$, the value (total expected reward) of $\pi$ in the MDP $\mathcal{M}$ is defined as
$
V^\pi ~:=~ \sum_{s,a} \sum_{\tau=0}^{\infty} \gamma^\tau \cdot \Prob{S_\tau = s \mid \pi, \mathcal{M}} \cdot \pi \brr{a \mid s} \cdot R^E \brr{s,a} , 
$ where $\Prob{S_\tau = s \mid \pi, \mathcal{M}}$ denotes the probability of visiting the state $s$ after $\tau$ steps by following the policy $\pi$. Let $\pi^L$ denote the learner's final policy at the end of teaching. The performance of the policy $\pi^L$ (w.r.t. $\pi^E$) in $\mathcal{M}$ can be evaluated via $\abs{V^{\pi^E} - V^{\pi^L}}$~\cite{abbeel2004,ziebart2010modeling}. The teaching objective is to ensure that the learner's final policy $\epsilon$-\emph{approximates} the teacher's policy, i.e., $\abs{V^{\pi^E} - V^{\pi^L}} \leq \epsilon$. The teacher aims to provide an optimized sequence of demonstrations $\bcc{\xi_t}_{t=1,2,\dots}$ to the learner to achieve the teaching objective. The teacher's performance is then measured by the number of demonstrations required to achieve this objective. 
Based on existing work \cite{liu2017iterative,imt_ijcai2019}, we assume that $\exists ~ \theta^* \in \Theta$ such that $\pi^E = \pi_{\theta^*}$ (we refer to $\theta^*$ as the \emph{target teaching parameter}). Similar to \cite{imt_ijcai2019}, we assume that a smoothness condition holds in the policy parameter space: $\abs{V^{\pi_{\theta}} - V^{\pi_{\theta'}}} \leq \mathcal{O} \brr{f\brr{\norm{\theta - \theta'}}}  \forall \theta , \theta' \in \Theta$. Then, the teaching objective in terms of $V^\pi$ convergence can be reduced to the convergence in the parameter space, i.e., we can focus on the quantity $\norm{\theta^* - \theta_t}$.

\section{Curriculum Design using Difficulty Scores}
\label{sec:curr-teacher-algo}

\looseness-1In this section, we introduce our curriculum strategy which is based on the concept of \emph{difficulty scores} and is agnostic to the dynamics of the learner.

\paragraph{Difficulty scores.}
We begin by assigning a difficulty score $\Psi_\theta \brr{\xi}$ for any demonstration $\xi$ w.r.t. a parameterized policy $\pi_\theta$ in the MDP $\mathcal{M}$. Inspired by difficulty scores for supervised learning algorithms~\cite{weinshall2018curriculum}, we consider a difficulty score which is directly proportional to the loss function $\ell$, i.e., $\Psi_\theta \brr{\xi} \propto g \brr{\ell \brr{\xi, \theta}}$, for a monotonically increasing function $g$. Setting $g(\cdot)=\exp(\cdot)$ leads to $\Psi_\theta \brr{\xi} = \frac{1}{\prod_{\tau} \pi_{\theta}(a_\tau^\xi | s_\tau^\xi)}$ for MDPs with deterministic transition dynamics. Based on this insight, we define the following difficulty score which we use throughout our work.

\begin{definition}
The difficulty score of a demonstration $\xi$ w.r.t. the policy $\pi_\theta$ in the MDP $\mathcal{M}$ is given by $\Psi_\theta \brr{\xi} := \frac{1}{\prod_{\tau} \pi_{\theta}(a_\tau^\xi | s_\tau^\xi)}$.
\label{def:diff-scores}
\end{definition}
\looseness-1Intuitively, the difficulty score of a demonstration $\xi$ w.r.t. an agent's policy is inversely proportional to the preference of the agent to follow the demonstration. Demonstrations with a higher likelihood under the agent's policy (higher preference) have a lower difficulty score and vice versa. With the above definition, the difficulty scores for any demonstration $\xi$ w.r.t. the teacher's and learner's policies (at any time $t$) are respectively given by $\Psi^E \brr{\xi} := \Psi_{\theta^*} \brr{\xi}$ and $\Psi^L_t \brr{\xi} := \Psi_{\theta_t} \brr{\xi}$.

\paragraph{General curriculum strategy.}
Our curriculum strategy picks the next demonstration $\xi_t$ to provide to the learner based on a preference ranking induced by the teacher's and learner's difficulty scores. The difficulty score of a demonstration $\xi$ w.r.t.~the teacher and learner (at any time t) is denoted by $\Psi^E$ and $\Psi^L_t$ respectively. Specifically, our curriculum strategy is given by:
\begin{equation}
    \xi_t ~\gets~ \argmax_{\xi} \frac{\Psi^L_t \brr{\xi}}{\Psi^E \brr{\xi}}.
    \label{eq:curr_scheme}
\end{equation}

\paragraph{Teacher-centric and learner-centric settings.}
In the teacher-centric setting formalized in Section \ref{sec:problem_setup}, our curriculum strategy utilizes the difficulty scores induced by the learner's current policy $\pi^L_t$ and the teacher's policy $\pi^E$. From Eq.~(\ref{eq:curr_scheme}) and Definition \ref{def:diff-scores}, we obtain the following teacher-centric curriculum strategy: $\xi_t \gets \argmax_{\xi} \prod_{\tau} \frac{\pi^{E}(a^\xi_\tau | s^\xi_\tau)}{\pi^L_t(a^\xi_\tau | s^\xi_\tau)}$.

Additionally, we also consider the learner-centric setting where a teacher agent is not present and the target policy $\pi^E$ is unknown. Here, the learner can benefit from designing a self-curriculum (i.e., automatically ordering demonstrations) based on its current policy $\pi_t^L$. We adapt our curriculum strategy to this setting by utilizing task-specific domain knowledge to define the teacher's difficulty score $\Psi^E (\xi)$ for any demonstration $\xi$. From Eq.~\eqref{eq:curr_scheme}, given the learner's current policy $\pi^L_t$ and the teacher's difficulty score $\Psi^E (\xi)$, the learner-centric curriculum strategy is given as follows: $\xi_t \gets \argmax_{\xi} \frac{1}{\Psi^E (\xi) \prod_{\tau} \pi^L_t(a^\xi_\tau | s^\xi_\tau)}$. 

Note that our curriculum strategy only requires access to the learner's and teacher's policies ($\pi_t^L$ and $\pi^E$) and does not depend on the learner's internal dynamics (i.e, its update rule as mentioned in Section \ref{sec:problem_setup}). This makes our approach more widely applicable to practical applications where it is often possible to infer an agent's policy, but the internal update rule is unknown.

\section{Theoretical Analysis of Our Curriculum Strategy}
\label{sec:curr-teacher-theory}

In this section, we present the theoretical analysis of our curriculum strategy for two popular learner models, namely, \irlmodel~and \bcmodel. For our analysis, we consider the teacher-centric setting as introduced in Section \ref{sec:problem_setup}. Our curriculum strategy obtains a preference ranking over the demonstrations to provide to the learner based on the difficulty scores (see Definition~\ref{def:diff-scores}). To this end, we analyze the relationship between the difficulty scores (w.r.t.~the teacher and the learner) of the provided demonstration and the teaching objective (convergence towards the target teaching parameter $\theta^*$) during each sequential update step of the learner. 

Given two difficulty values $\psi^E, \psi^L \in \mathbb{R}$, we define the feasible set of demonstrations at time $t$ as $\mathcal{D}_t \brr{\psi^E, \psi^L} := \bcc{\xi : \Psi^E \brr{\xi} = \psi^E \text{ and } \Psi^L_t \brr{\xi} = \psi^L}$. This set contains all demonstrations $\xi$ for which the teacher’s difficulty score $\Psi^E(\xi)$ is equal to the value $\psi^E$, and the learner’s difficulty score $\Psi^L(\xi)$ is equal to the value $\psi^L$. Let $\Delta_t\brr{\psi^E, \psi^L}$ denote the expected convergence rate of the teaching objective at time $t$, given difficulty values $\psi^E$ and $\psi^L$:
\begin{align}
\Delta_t \brr{\psi^E, \psi^L} 
:= \mathbb{E}_{\xi_t \mid \psi^E, \psi^L}\![\norm{\wopt-\w}^2-\norm{\wopt-\wnext (\xi_t)}^2],
\label{eq:convergence-quantity}
\end{align}
where the expectation is w.r.t.~the uniform distribution over the set $\mathcal{D}_t \brr{\psi^E, \psi^L}$. Below, we analyse the differential effect of $\psi^E$ and $\psi^L$ on $\Delta_t \brr{\psi^E, \psi^L}$, i.e., the effect of picking demonstrations with higher or lower difficulty scores on the learning progress.

\subsection{Analysis for \irlmodel~Learner}
\label{subsec:max-ent}

Here, we consider the popular \irlmodel~learner model~\cite{ziebart2008,ziebart2010modeling,ziebart2013principle} in an MDP $\mathcal{M}$ with deterministic transition dynamics, i.e., $\mathcal{T}: \mathcal{S} \times \mathcal{S} \times \mathcal{A} \rightarrow \bcc{0,1}$. The \irlmodel~learner model uses a parametric reward function $R_\theta: \mathcal{S} \times \mathcal{A} \rightarrow \mathbb{R}$ where $\theta \in \mathbb{R}^d$ is a parameter. The reward function $R_\theta$ also depends on a feature mapping $\phi: \mathcal{S} \times \mathcal{A} \rightarrow \mathbb{R}^{d'}$ which encodes each state-action pair $\brr{s,a}$ by a feature vector $\phi\brr{s,a} \in \mathbb{R}^{d'}$. For our theoretical analysis, we consider $R_\theta$ with a linear form, i.e., $R_\theta\brr{s,a} := \ipp{\theta}{\phi\brr{s,a}}$ and $d=d'$. In our experiments, we go beyond these simplifications and consider environments with stochastic transition dynamics and non-linear reward functions.

Under the \irlmodel~learner model, the parametric policy takes the following soft-Bellman form: $\pi_{\theta}(a|s) = \exp{(Q_{\theta}(s,a) - V_{\theta}(s) )}$, where
\small
$V_{\theta}(s) = \log \sum_{a} \exp{Q_{\theta}(s,a)}$ and 
$Q_{\theta}(s, a) = R_{\theta}(s,a) + \gamma \sum_{s'} \mathcal{T}(s'|s, a) \cdot V_{\theta}(s')$.
\normalsize
For any given $\theta$, the corresponding policy $\pi_\theta$ can be efficiently computed via the Soft-Value-Iteration procedure with reward $R_\theta$ (see  \cite[Algorithm.~9.1]{ziebart2010modeling}). 
For the above setting and a given parameter $\theta$, the probability distribution $\mathbb{P}\brr{\xi | \theta}$ 
over the demonstration $\xi$ takes the closed-form
$\mathbb{P}\brr{\xi | \theta}  = \frac{\exp\brr{\ipp{\theta}{\mu^\xi}}}{Z\brr{\theta}}$,
where $\mu^\xi := \sum_{\tau = 0}^{\infty} \gamma^\tau \phi(s_\tau^\xi, a_\tau^\xi)$ and $Z\brr{\theta}$ is a normalization factor. Then, at time $t$, the gradient  
of the \irlmodel~learner is given by
$g_t = \mu^{\pi_{\theta_t}} - \mu^{\xi_t}$,
where
\small
$\mu^\pi := \sum_{s,a} \sum_{\tau=0}^{\infty} \gamma^\tau \cdot \Prob{S_\tau = s \mid \pi, \mathcal{M}} \cdot \pi \brr{a \mid s} \cdot \phi \brr{s,a}$
\normalsize
is the feature expectation vector of policy $\pi$. We note that our curriculum strategy in Eq.~(\ref{eq:curr_scheme}) is not using knowledge of $g_t$.

For the \irlmodel~learner, we obtain the following theorem, which shows the differential effect of the difficulty scores (w.r.t. the teacher and the learner) on the expected rate of convergence of the teaching objective $\Delta_t \brr{\psi^E, \psi^L}$. We note that \cite{weinshall2018curriculum} obtained similar results for linear regression learner models in the supervised learning setting.
 
\begin{theorem}
\label{prop:curr}
Assume that $\eta_t$ is sufficiently small for all $t$ s.t. $\eta_t \norm{g_t}^2 \ll 2 \abs{\ipp{\wopt - \w}{g_t}}$,
where $g_t$ is the gradient of the \irlmodel~learner. Then, for the \irlmodel~learner, the expected convergence rate of the teaching objective $\Delta_t \brr{\psi^E, \psi^L}$ is:
\begin{itemize}[parsep=0.5pt, leftmargin=*,labelindent=0.5pt]
\vspace{-2mm}
\item monotonically decreasing with value $\psi^E$, i.e., $\frac{\partial \Delta_t}{\partial\psi^E} < 0$, and 
\item monotonically increasing with value $\psi^L$, i.e., $\frac{\partial \Delta_t}{\partial\psi^L} > 0$.
\end{itemize}
\end{theorem}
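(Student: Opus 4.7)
The goal is to obtain a tractable expression for $\Delta_t(\psi^E,\psi^L)$ and then differentiate. The plan is to first expand the one-step descent increment, apply the small step-size assumption to linearize it, and then exploit the closed-form Gibbs likelihood of \irlmodel{} to rewrite the demonstration-dependent term purely in terms of $\log\psi^E$ and $\log\psi^L$. Once that is done, the two monotonicity statements follow from elementary calculus.

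\textbf{Step 1: Expand the one-step progress.} Using $\theta_{t+1}=\theta_t-\eta_t g_t$ (no projection needed since $\Theta=\mathbb{R}^d$), I compute
\begin{equation*}
\norm{\wopt-\w}^2-\norm{\wopt-\wnext}^2
= -2\eta_t\ipp{\wopt-\w}{g_t} - \eta_t^2\norm{g_t}^2 .
\end{equation*}
Under the assumed step-size condition $\eta_t\norm{g_t}^2 \ll 2\abs{\ipp{\wopt-\w}{g_t}}$, the quadratic term is negligible and can be dropped, so $\Delta_t(\psi^E,\psi^L)\approx -2\eta_t\,\mathbb{E}_{\xi_t\mid\psi^E,\psi^L}\ipp{\wopt-\w}{g_t}$.

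\textbf{Step 2: Substitute the \irlmodel{} gradient and rewrite via difficulty scores.} Plugging in $g_t=\mu^{\pi_{\w}}-\mu^{\xi_t}$, the only piece that depends on the sampled demonstration is $\ipp{\wopt-\w}{\mu^{\xi_t}}$. The key observation is that for deterministic transitions the product $\prod_\tau \pi_\theta(a^\xi_\tau\mid s^\xi_\tau)$ equals $\mathbb{P}(\xi\mid\theta)/P_0(s_0^\xi)$, so the Gibbs form $\mathbb{P}(\xi\mid\theta)=\exp(\ipp{\theta}{\mu^\xi})/Z(\theta)$ yields
\begin{equation*}
\log\Psi_\theta(\xi) \;=\; \log P_0(s_0^\xi) + \log Z(\theta) - \ipp{\theta}{\mu^\xi}.
\end{equation*}
Applying this at $\theta=\wopt$ and $\theta=\w$ and subtracting, the $P_0$ term cancels, giving
\begin{equation*}
\ipp{\wopt-\w}{\mu^\xi} = \log Z(\wopt) - \log Z(\w) + \log\Psi^L_t(\xi) - \log\Psi^E(\xi).
\end{equation*}
Crucially, the right-hand side depends on $\xi$ only through the pair $(\Psi^E(\xi),\Psi^L_t(\xi))$, so taking the conditional expectation over $\mathcal{D}_t(\psi^E,\psi^L)$ simply replaces these by $\psi^E$ and $\psi^L$.

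\textbf{Step 3: Assemble and differentiate.} Substituting back yields
\begin{equation*}
\Delta_t(\psi^E,\psi^L) \;\approx\; -2\eta_t\ipp{\wopt-\w}{\mu^{\pi_{\w}}} + 2\eta_t\bigl[\log Z(\wopt)-\log Z(\w) + \log\psi^L - \log\psi^E\bigr].
\end{equation*}
Only the last two terms depend on $(\psi^E,\psi^L)$, so
\begin{equation*}
\frac{\partial \Delta_t}{\partial \psi^E} = -\frac{2\eta_t}{\psi^E} < 0, \qquad
\frac{\partial \Delta_t}{\partial \psi^L} = \frac{2\eta_t}{\psi^L} > 0,
\end{equation*}
which are exactly the two claimed monotonicities.

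\textbf{Anticipated obstacle.} The mechanics of Steps 1 and 3 are routine; the substantive content lies in Step 2, where one must recognize that the \irlmodel{} log-likelihood is linear in the parameter and that the partition function and the $P_0$ factor both decouple from $\xi$ once one takes the difference $\wopt-\w$. Without this cancellation, the conditional expectation would not collapse to a clean function of $(\psi^E,\psi^L)$, and the partial derivatives would pick up demonstration-dependent terms. I would also be careful to note that the approximation in Step 1 is an equality up to a quantity that is asymptotically dominated, so the signs of the derivatives are preserved as claimed.
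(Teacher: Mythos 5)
Your proof is correct and follows essentially the same route as the paper's: linearize the one-step progress using the small step-size condition, then use the Gibbs form $\mathbb{P}(\xi\mid\theta)=\exp(\ipp{\theta}{\mu^\xi})/Z(\theta)$ together with deterministic transitions to write $\ipp{\wopt-\w}{\mu^{\xi}}$ as $\log\psi^L-\log\psi^E$ plus a $\xi$-independent constant, and differentiate. The paper merely packages your Step~2 as a separate auxiliary lemma (absorbing the $\log Z$ ratio and the $\ipp{\wopt-\w}{\mu^{\pi_{\theta_t}}}$ term into a single constant $K_t$); the substance, including the $P_0$ cancellation and the collapse of the conditional expectation, is identical.
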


Theorem~\ref{prop:curr} suggests that choosing demonstrations with lower difficulty score w.r.t.~the teacher’s policy and higher difficulty score w.r.t.~the learner’s policy would lead to faster convergence. Our curriculum strategy in Eq.~\eqref{eq:curr_scheme} induces a preference ranking over demonstrations that aligns with these insights of Theorem~\ref{prop:curr}. Furthermore, the following theorem states that the particular form of combining the two difficulty scores used in curriculum strategy, Eq.~\eqref{eq:curr_scheme}, achieves linear convergence to the teaching objective. This is similar to the state-of-the-art \omn~algorithm based on the IMT framework for sequential learners~\cite{liu2017iterative,imt_ijcai2019}. Importantly, unlike the \omn~algorithm, our curriculum strategy does not rely on specifics of the learner model when selecting demonstrations. 

\begin{theorem}
\label{thm:teach-complexity}
Consider Algorithm~\ref{alg:interaction} with the \irlmodel~learner and our curriculum strategy in Eq.~\eqref{eq:curr_scheme}. Then, the teaching objective $\norm{\wopt - \theta_{t}} \leq \epsilon$ is achieved in $t = \mathcal{O}(\log \frac{1}{\epsilon})$ iterations.
\end{theorem}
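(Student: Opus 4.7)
The plan is to prove a per-iteration geometric contraction $\norm{\wopt - \wnext}^2 \leq (1-c)\norm{\wopt - \w}^2$ for some universal constant $c \in (0,1)$, from which $\norm{\wopt - \w} \leq \epsilon$ after $t = O(\log(1/\epsilon))$ iterations follows immediately by taking logarithms. Starting from the gradient-update expansion
\begin{equation*}
\norm{\wopt - \wnext}^2 = \norm{\wopt - \w}^2 + 2\eta_t \ipp{\wopt - \w}{g_t} + \eta_t^2 \norm{g_t}^2 ,
\end{equation*}
the step-size hypothesis $\eta_t \norm{g_t}^2 \ll 2\abs{\ipp{\wopt - \w}{g_t}}$ inherited from Theorem~\ref{prop:curr} renders the $\eta_t^2 \norm{g_t}^2$ term subdominant to $2\eta_t \ipp{\wopt - \w}{g_t}$, so the task reduces to proving an inequality of the form $\ipp{\wopt - \w}{g_t} \leq -\lambda \norm{\wopt - \w}^2$ for some $\lambda > 0$.

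The second step exploits the closed form of the \irlmodel~trajectory distribution. Because $\mathbb{P}(\xi|\theta) \propto \exp(\ipp{\theta}{\mu^\xi})$, the ratio $\Psi^L_t(\xi) / \Psi^E(\xi)$ equals a $\xi$-independent constant times $\exp(\ipp{\wopt - \w}{\mu^\xi})$, so our curriculum in Eq.~\eqref{eq:curr_scheme} simplifies to $\xi_t = \argmax_\xi \ipp{\wopt - \w}{\mu^\xi}$. Since the maximum over trajectories dominates the expectation under any trajectory distribution, and in particular under $\pi^E$ itself, I obtain $\ipp{\wopt - \w}{\mu^{\xi_t}} \geq \ipp{\wopt - \w}{\fopt}$. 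Substituting $g_t = \mu^{\pi_{\w}} - \mu^{\xi_t}$ then produces $\ipp{\wopt - \w}{g_t} \leq -\ipp{\wopt - \w}{\fopt - \mu^{\pi_{\w}}}$. To convert this into the desired quadratic lower bound I invoke the standard identity $\mu^{\pi_\theta} = \nabla_\theta \log Z(\theta)$, whose Hessian equals the Fisher information $\mathrm{Cov}_{\xi\sim\pi_\theta}[\mu^\xi]$: under a mild condition that this Hessian is uniformly lower bounded by $\lambda I$ on the segment joining $\w$ and $\wopt$, the mean-value theorem gives $\ipp{\wopt - \w}{\fopt - \mu^{\pi_{\w}}} \geq \lambda \norm{\wopt - \w}^2$. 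Combining with the step-size assumption and a uniform lower bound $\eta_t \geq \eta_{\min} > 0$ then yields $\norm{\wopt - \wnext}^2 \leq (1 - c\lambda\eta_{\min}) \norm{\wopt - \w}^2$, and unrolling delivers the claimed iteration complexity.

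The hard part is precisely this uniform lower bound on the Fisher information matrix $\mathrm{Cov}_{\xi\sim\pi_\theta}[\mu^\xi]$: it is a non-degeneracy requirement on the trajectory feature vectors $\{\mu^\xi\}$ along the learned parameter direction, without which one could only conclude monotone non-increase of $\norm{\wopt - \w}^2$ rather than a geometric rate. This is the implicit ``mild technical condition'' hinted at in the preamble to the theorem; any fully rigorous proof will consist of making the non-degeneracy assumption explicit (for example, in terms of the MDP $\mdp$ and the feature map $\phi$) and tracking the resulting constants through the contraction.
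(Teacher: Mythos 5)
Your reduction of the curriculum rule to $\xi_t \gets \argmax_\xi \ipp{\wopt - \w}{\mu^\xi}$ is exactly the paper's first step (its Lemma~\ref{helper-lemma}), but from there you take a genuinely different route to the contraction. The paper simply \emph{assumes} that the selected demonstration's feature vector decomposes as $\mu^{\xi_t} = \beta_t(\wopt - \w) + \delta_t$ with $\beta_t$ bounded below and $\norm{\delta_t} \leq \Delta$ small (this is the ``richness'' condition it alludes to), and then grinds through the norm expansion. You instead \emph{derive} the alignment: the max over the demonstration set dominates the mean under $\mathbb{P}(\cdot \mid \theta^*)$, giving $\ipp{\wopt-\w}{\mu^{\xi_t}} \geq \ipp{\wopt-\w}{\mu^{\pi_{\theta^*}}}$, and then the exponential-family identity $\mu^{\pi_\theta} = \nabla_\theta \log Z(\theta)$ plus a uniform lower bound $\lambda I$ on the trajectory-feature covariance converts $\ipp{\wopt-\w}{\mu^{\pi_{\theta^*}} - \mu^{\pi_{\w}}}$ into $\lambda \norm{\wopt-\w}^2$. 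Both steps check out (the $P_0$ and $Z$ factors cancel in the ratio as you claim, and the mean-value argument on $\nabla \log Z$ is sound), and your non-degeneracy condition on the Fisher information is arguably more interpretable than the paper's assumption on $\mu^{\xi_t}$ itself, though it must hold along the entire segment joining the iterates to $\wopt$.

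The genuine gap is in how you dispose of the $\eta_t^2 \norm{g_t}^2$ term. You invoke the condition $\eta_t \norm{g_t}^2 \ll 2\abs{\ipp{\wopt-\w}{g_t}}$ \emph{and} a uniform lower bound $\eta_t \geq \eta_{\min} > 0$, but these are incompatible as $\w \to \wopt$: the inner product is bounded by $\norm{\wopt-\w}\,\norm{g_t} \to 0$ while $\norm{g_t} = \norm{\mu^{\pi_{\w}} - \mu^{\xi_t}}$ does not vanish in general, so the domination condition forces $\eta_t \to 0$. Concretely, with $\norm{g_t} \leq G$ your one-step bound reads $\norm{\wopt-\wnext}^2 \leq \norm{\wopt-\w}^2 - 2\eta_t \lambda \norm{\wopt-\w}^2 + \eta_t^2 G^2$, and the right-hand side exceeds $(1-c)\norm{\wopt-\w}^2$ once $\norm{\wopt-\w}$ drops below order $\eta_t G / \sqrt{\lambda}$; you get geometric contraction only down to an error floor, not to arbitrary $\epsilon$. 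The paper faces the same issue and resolves it honestly: it keeps the quadratic term, obtains a recursion of the form $\norm{\wopt-\wnext} \leq (1-\beta)\norm{\wopt-\w} + \sqrt{\eta_{\max}\,C}$, sums the geometric series to an additive floor of $\sqrt{\eta_{\max} C}/\beta$, and then requires $\eta_{\max}(\Delta+L) = \mathcal{O}(\epsilon^2)$ so that the floor is at most $\epsilon/2$. To complete your argument you would need the same move---let the step size (or an explicit bound on $\norm{g_t}$) scale with the target accuracy---rather than asserting a single universal contraction constant $c$.
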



In the above theorem, the constant terms suppressed by the $\mathcal{O}(\cdot)$ notation depend on the learning rate of the learner ($\eta_t$), the distance between the learner's initial parameter/knowledge and the target teaching parameter ($\norm{\wopt - \theta_1}$), and the \emph{richness} of the set of demonstrations obtained by executing the policy $\pi^E$ in the MDP $\mathcal{M}$. The \emph{richness} notion is formally discussed in the Appendix.

\subsection{Analysis for \bcmodel~Learner}
\label{subsec:cross-ent}

Next, we consider the \bcmodel~learner model~\cite{bain1995framework,osa2018algorithmic}. In this case, the learner's parametric policy takes the following softmax form:
$\pi_{\theta}(a|s) = \frac{\exp \brr{H_{\theta}(s,a)}}{\sum_{a'} \exp \brr{H_{\theta}(s,a')}}$,
where $H_\theta: \mathcal{S} \times \mathcal{A} \rightarrow \mathbb{R}$ is a parametric scoring function that depends on the parameter $\theta \in \mathbb{R}^d$ and a feature mapping $\phi: \mathcal{S} \times \mathcal{A} \rightarrow \mathbb{R}^{d'}$. For our theoretical analysis, we consider a linear scoring function $H_{\theta}$ of the form $H_\theta\brr{s,a} := \ipp{\theta}{\phi\brr{s,a}}$ (with $d=d'$). Then, at time step $t$, the gradient $g_t$ of the \bcmodel~learner is given by:
$g_t =\small \sum_{\tau=0}^{\infty} \brr{\Expectover{a \sim \pi_{\theta_t} \brr{\cdot | s_\tau^{\xi_t}}}{\phi(s_\tau^{\xi_t}, a)} - \phi(s_\tau^{\xi_t}, a_\tau^{\xi_t})}$. \normalsize
In the experiments, we also consider non-linear scoring functions parameterized by neural networks.

Similar to Theorem~\ref{prop:curr}, we obtain the following theorem for the \bcmodel~learner, which also justifies our curriculum strategy in Eq.~\eqref{eq:curr_scheme}.
\begin{theorem}
\label{prop:curr-il}
Assume that $\eta_t$ is sufficiently small for all $t$ s.t.~$\eta_t \norm{g_t}^2 \ll 2 \abs{\ipp{\wopt - \w}{g_t}}$,
where $g_t$ is the gradient of the \bcmodel~learner. Then, for the \bcmodel~learner, the expected convergence rate of the teaching objective $\Delta_t \brr{\psi^E, \psi^L}$, after first-order approximation, is:
\begin{itemize}[parsep=0.5pt, leftmargin=*,labelindent=0.5pt]
\vspace{-2mm}
\item monotonically decreasing with $\psi^E$, i.e., $\frac{\partial \Delta_t}{\partial\psi^E} < 0$, and 
\item monotonically increasing with $\psi^L$, i.e., $\frac{\partial \Delta_t}{\partial\psi^L} > 0$.
\end{itemize}
\end{theorem}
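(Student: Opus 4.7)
The plan is to mirror the structure of the \irlmodel\ argument (Theorem~\ref{prop:curr}) while handling the BC-specific softmax normalization. First, expanding $\theta_{t+1} = \theta_t - \eta_t g_t$ gives the identity $\|\theta^* - \theta_t\|^2 - \|\theta^* - \theta_{t+1}\|^2 = 2\eta_t\langle \theta_t - \theta^*, g_t\rangle - \eta_t^2\|g_t\|^2$; under the hypothesis $\eta_t\|g_t\|^2 \ll 2|\langle \theta^*-\theta_t, g_t\rangle|$ the quadratic term is dropped and $\Delta_t(\psi^E,\psi^L) \approx 2\eta_t\,\mathbb{E}_{\xi_t \mid \psi^E,\psi^L}[\langle \theta_t - \theta^*, g_t\rangle]$. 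The whole analysis then reduces to understanding this conditional expectation.

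The key algebraic step is to rewrite the inner product using the softmax identity $\langle \theta,\phi(s,a)\rangle = \log \pi_\theta(a|s) + \log Z_\theta(s)$, where $Z_\theta(s) := \sum_{a'}\exp(\langle\theta,\phi(s,a')\rangle)$. Applying this identity to both pieces of the BC gradient — the observed feature $\phi(s_\tau^{\xi_t},a_\tau^{\xi_t})$ and the learner's expected feature $\mathbb{E}_{a\sim\pi_{\theta_t}}[\phi(s_\tau^{\xi_t},a)]$ — makes a common $\log Z_{\theta_t}(s)-\log Z_{\theta^*}(s)$ contribution appear in both, so it cancels in the difference. What survives is
\[
\langle \theta_t - \theta^*, g_t\rangle \;=\; \log\frac{\Psi_{\theta_t}(\xi_t)}{\Psi_{\theta^*}(\xi_t)} \;+\; \sum_\tau \mathrm{KL}\bigl(\pi_{\theta_t}(\cdot|s_\tau^{\xi_t})\,\big\|\,\pi_{\theta^*}(\cdot|s_\tau^{\xi_t})\bigr),
\]
and on the conditioning set $\mathcal{D}_t(\psi^E,\psi^L)$ the first term reduces to the deterministic quantity $\log(\psi^L/\psi^E)$. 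This is the BC analogue of the clean linear identity that drives the \irlmodel\ proof, but with an additional state-local KL residual that the trajectory-level MaxCausalEnt normalization has no counterpart of.

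To finish, I expand each $\mathrm{KL}(\pi_{\theta_t}(\cdot|s)\,\|\,\pi_{\theta^*}(\cdot|s))$ in $\theta^*-\theta_t$ around $\theta^*=\theta_t$; the first-order term vanishes because the KL is minimized at equality, so the KL sum is $O(\|\theta^*-\theta_t\|^2)$ while $\log(\psi^L/\psi^E)$ is $O(\|\theta^*-\theta_t\|)$. Discarding this second-order residual is what the statement means by ``after first-order approximation,'' yielding $\Delta_t(\psi^E,\psi^L) \approx 2\eta_t(\log\psi^L - \log\psi^E)$, from which $\partial\Delta_t/\partial\psi^E \approx -2\eta_t/\psi^E < 0$ and $\partial\Delta_t/\partial\psi^L \approx 2\eta_t/\psi^L > 0$. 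I expect the main obstacle to be the cancellation step: carefully tracking the per-state $\log Z_\theta(s)$ terms so that they cancel exactly across the two pieces of the BC gradient, and then verifying that the leftover KL sum is genuinely second-order in $\theta^*-\theta_t$, since — unlike in the IRL case — there is no global normalizer absorbing it and hence no automatic way to remove its conditional expectation over $\xi_t \mid \psi^E,\psi^L$.
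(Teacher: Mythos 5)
Your proof is correct and follows essentially the same route as the paper: both reduce the claim to the approximate identity $-\langle \theta^* - \theta_t,\, g_t\rangle \approx \log\bigl(\Psi^L_t(\xi_t)/\Psi^E(\xi_t)\bigr)$ (the paper's auxiliary lemma for the \bcmodel~learner) and then repeat the \irlmodel~argument verbatim --- drop the quadratic term via the step-size condition, substitute, and differentiate. The only difference is presentational: where the paper directly applies a first-order Taylor expansion of $\sum_{\tau}\log\sum_{a'}\exp\bigl(H_\theta(s_\tau^{\xi_t},a')\bigr)$ around $\theta_t$, you write the exact decomposition and identify the dropped remainder explicitly as $\sum_\tau \mathrm{KL}\bigl(\pi_{\theta_t}(\cdot|s_\tau^{\xi_t})\,\|\,\pi_{\theta^*}(\cdot|s_\tau^{\xi_t})\bigr)$, which is precisely that Taylor remainder and gives a slightly more transparent justification (second order in $\theta^*-\theta_t$) for discarding it.
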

We note that the proof of Theorem~\ref{prop:curr-il} relies on the first-order Taylor approximation of the term $\sum_{\tau} \log \sum_{a'} \exp \brr{H_\theta \brr{s_\tau^{\xi_t} , a'}}$ around $\theta_t$ (detailed in the Appendix). Due to this approximation, it is more challenging to obtain a convergence result analogous to Theorem~\ref{thm:teach-complexity}.


\section{Experimental Evaluation: Teacher-Centric Setting}\label{sec:experiments}

\looseness-1Inspired by the works of \cite{ng2000algorithms,levine2010feature,imt_ijcai2019}, we evaluate the performance of our curriculum strategy, Eq.~(\ref{eq:curr_scheme}), in a synthetic car driving environment on \irlmodel~and \bcmodel~learners. In particular, we consider the environment of \cite{imt_ijcai2019} and the teacher-centric setting of Section \ref{sec:problem_setup}. 

\paragraph{Car driving setup.}
\looseness-1Fig.~\ref{fig:templates} illustrates a synthetic car driving environment consisting of $8$ different types of tasks, denoted as \texttt{T0}, \texttt{T1}, $\ldots$, \texttt{T7}. Each type is associated with different driving skills. For instance, \texttt{T0} corresponds to a basic setup representing a traffic-free highway. \texttt{T1} represents a crowded highway. \texttt{T2} has stones on the right lane, whereas \texttt{T3} has a mix of both cars and stones.  Similarly, \texttt{T4} has grass on the right lane, and \texttt{T5} has a mix of both grass and cars. \texttt{T6} and \texttt{T7} introduce more complex features such as pedestrians, police, and HOV (high occupancy vehicles).
The agent starts navigating from an initial state at the bottom of the left lane of each task, and the goal is to reach the top of a lane while avoiding cars, stones, and other obstacles. 
The agent's action space is given by $\mathcal{A} = $ \{\texttt{left}, \texttt{ straight}, \texttt{ right}\}. Action \texttt{left} steers the agent to the left of the current lane. If the agent is already in the leftmost lane when taking action \texttt{left}, then the lane is randomly chosen with uniform probability. We define similar stochastic dynamics for taking action \texttt{right};  action \texttt{straight} means no change in the lane. Irrespective of the action taken, the agent always moves forward.


\begin{figure*}[t!]
\centering
\captionsetup[subfigure]{aboveskip=0pt,belowskip=1pt}
\begin{minipage}{0.60\textwidth}
\centering
	\includegraphics[width=1\linewidth]{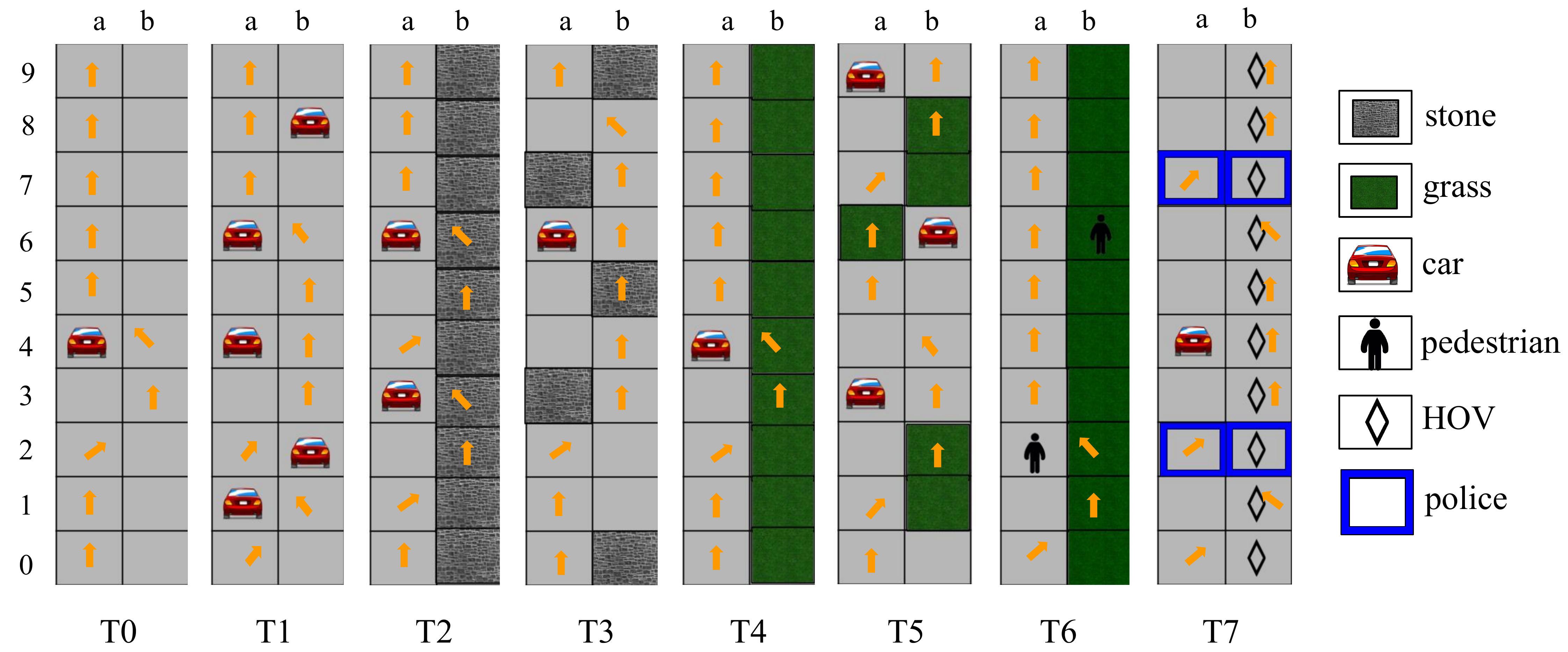}
	\caption{Car environment with $8$ different types of tasks. Arrows represent the path taken by the teacher's policy.
	}
	\label{fig:templates}
\end{minipage}
\qquad
\begin{minipage}{0.3\textwidth}
\centering
\begingroup
\renewcommand{\arraystretch}{0.96} 
	\centering
	\begin{tabular}{|c|c|}
		\hline
		\\[-1.0em]
	    \textbf{$\phi^E(s)$} & \textbf{$w$} \\
		\hline
		\texttt{stone} & -1\\
		\texttt{grass} & -0.5\\
		\texttt{car} & -5\\
		\texttt{ped} & -10\\
		\texttt{car-front} & -2\\
		\texttt{ped-front} & -5\\
		\texttt{HOV} & +1\\
		\texttt{police} & 0\\
		\hline
	\end{tabular}
	\caption{
	Environment features $\phi^E\brr{s}$ and reward weights $w$.
	}
	\label{tab:wstar}
\endgroup
\end{minipage}
\vspace{-2.5mm}
\end{figure*}
\vspace{-1mm}
\paragraph{Environment MDP.}
Based on the above setup, we define the environment MDP, $\mathcal{M}_{\mathrm{car}}$, consisting of $8$ types of tasks, namely \texttt{T0}--\texttt{T7}, and $5$ tasks of each type. Every location in the environment is associated with a state. Each task is of length $10$ and width $2$, leading to a state space of size $5 \times 8 \times 20$.
We consider an action-independent reward function $R^{E_{\mathrm{car}}}$ that is dependent on an underlying feature vector $\phi^E$ (see Fig.~\ref{tab:wstar}). The feature vector of a state $s$, denoted by $\phi^E(s)$, is a binary vector encoding the presence or absence of an object at the state. In this work we have two types of features:
features indicating the type of the current cell as \texttt{stone}, \texttt{grass},  \texttt{car}, \texttt{ped}, \texttt{police}, and \texttt{HOV}, as well as features providing some look-ahead information such as whether there is a car or pedestrian in the immediate front cell (denoted as \texttt{car-front} and \texttt{ped-front}).
Now we explain the reward function $R^{E_{\mathrm{car}}}$. For states in tasks of type \texttt{T0}-\texttt{T6}, the reward 
is given by $\ipp{w}{\phi^E(s)}$ (see Fig.~\ref{tab:wstar}).
Essentially there are different penalties (i.e., negative rewards) for colliding with specific obstacles such as \texttt{stone} and \texttt{car}.
For states in tasks of type \texttt{T7}, there is a reward of value $+1$ for driving on \texttt{HOV}; however, if \texttt{police} is present while driving on \texttt{HOV}, a reward value of $-5$ is obtained. Overall, this results in the reward function $R^{E_{\mathrm{car}}}$ being nonlinear.

\subsection{Teaching Algorithms}
\label{sec:teaching_algorithms}
Here, we introduce the teaching algorithms considered in our experiments. The teacher's near-optimal policy $\pi^E$ is obtained via policy iteration~\cite{sbrl2018}.
The teacher selects demonstrations to provide to the learner using its teaching algorithm.
We compare the performance of our proposed \cur~teacher, which implements our strategy in Eq.~(\ref{eq:curr_scheme}), with the following baselines:
\begin{itemize}[parsep=0.5pt, leftmargin=*,labelindent=0.5pt]
    \item \curt: A variant of our \cur~teacher that samples demonstrations based on the difficulty score $\Psi^E$ alone, and sets $\Psi^L_t$ to constant.
    \item \curl: A similar variant of our \cur~teacher that samples demonstrations based on the difficulty score $\Psi^L_t$ alone, and sets $\Psi^E$ to constant.
    \item \agn: an agnostic teacher that picks demonstrations based on random ordering~\cite{weinshall2018curriculum,imt_ijcai2019}.
    \item \omn: The omniscient teacher is a state-of-the-art algorithm~\cite{imt_ijcai2019,liu2017iterative}, which is applicable only to \irlmodel~learners. \omn~requires complete knowledge of the parameter $\theta^*$, the learner's current parameter $\theta_t$, and the learner's gradients $\eta_t g_t$. Based on this knowledge, the teacher picks demonstrations to directly steer the learner towards $\theta^*$, i.e., by minimizing $\norm{\wopt - (\w - \eta_{t}g_t)}$.
    \item \looseness-1\bbox: The blackbox teacher \cite{imt_ijcai2019} is designed based on the functional form of gradients for the linear MaxEnt-IRL learner model.\footnote{The \bbox~teacher's objective is derived under the assumptions that the reward function can be linearly parameterized as  $\ipp{w^*}{\phi^E(s)}$ and gradients $g_t$ are based on the linear \irlmodel~learner model. Under these assumptions, the teacher's objective can be equivalently written as $|\ipp{w^*}{g_t}|$.\label{foot:bbox}}
    Specifically, the teacher picks a demonstration $\xi$ which maximizes $|\sum_{s', a'} \{ \rho^{\pi^L_t}(s',a') - \rho^{\xi}(s',a')\} R^E(s',a')|$, where $\rho$ denotes state visitation frequency vectors.
    The \bbox~teacher does not require access to \wopt{} or the learner's current parameter \w; however, it requires access to the true reward function $R^E$.
    \item \scot: \looseness-1The \scot~teacher \cite{brown2019machine} aims to find the smallest set of demonstrations required to teach an optimal reward function to the \irlmodel learner. The teacher uses a set cover algorithm to pre-compute the entire curriculum as a batch, prior to training. In our implementation, after having provided the entire batch, the teacher continues providing demonstrations selected at random.

\end{itemize}


\begin{figure*}[t!]
\centering
\centering
\captionsetup[subfigure]{aboveskip=0pt,belowskip=-4pt}
	\begin{subfigure}[b]{.225\textwidth}
    \centering
	{
		\includegraphics[width=\textwidth]{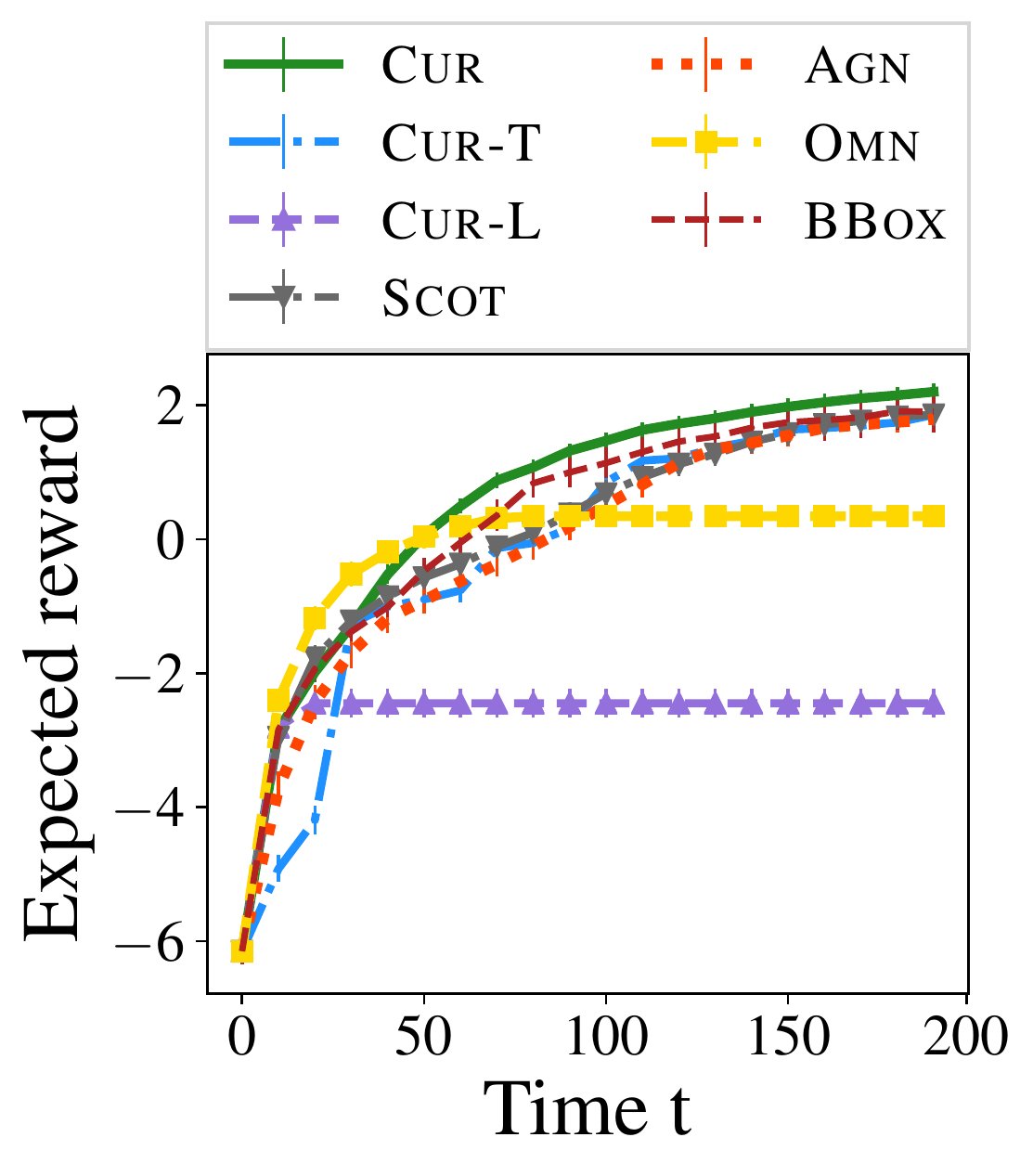}
		\caption{}
		\label{fig:IRL.expected.reward:T0}
	}
	\end{subfigure}
	\ \ \ \
	\begin{subfigure}[b]{.225\textwidth}
		\centering
		{
			\includegraphics[width=\textwidth]{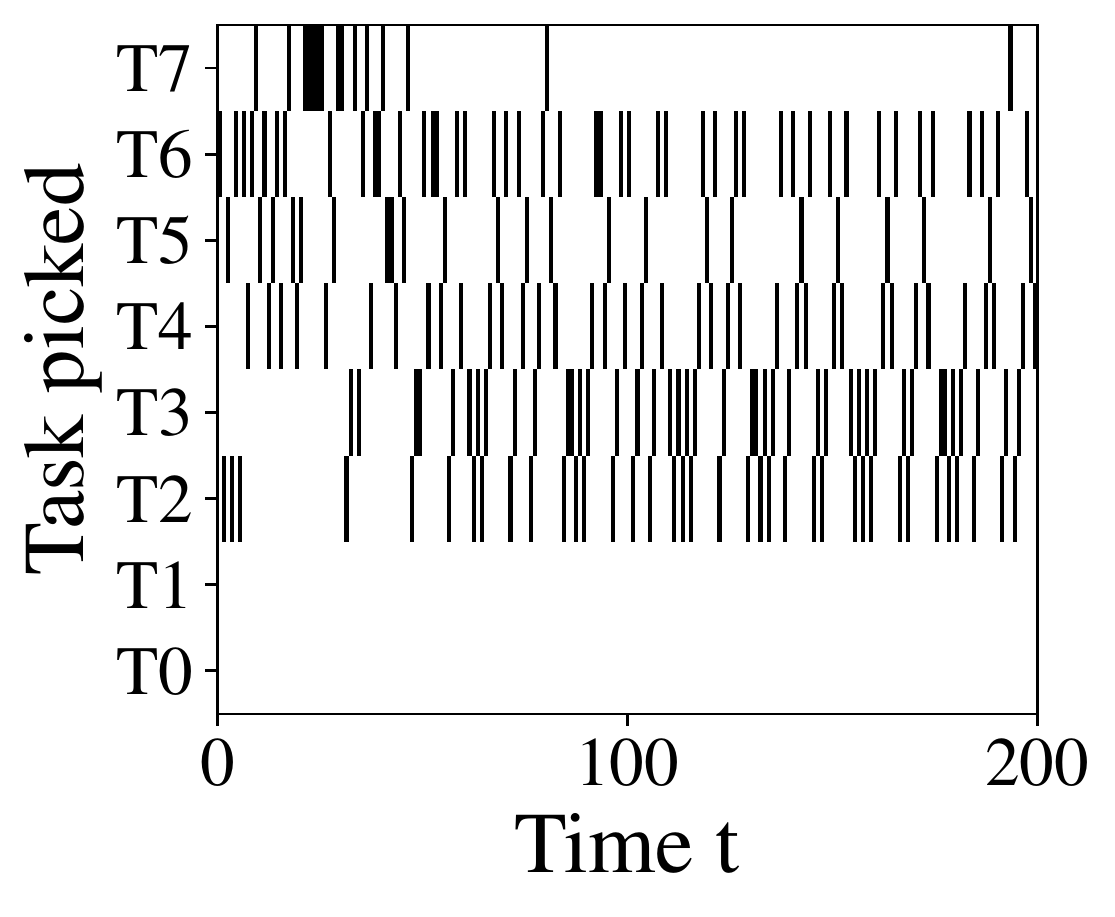}
			\caption{}
			\label{fig:IRL.curriculum:T0}
		}
	\end{subfigure}
	\ \ \ \
	\begin{subfigure}[b]{.22\textwidth}
		\centering
		{
			\includegraphics[width=\textwidth]{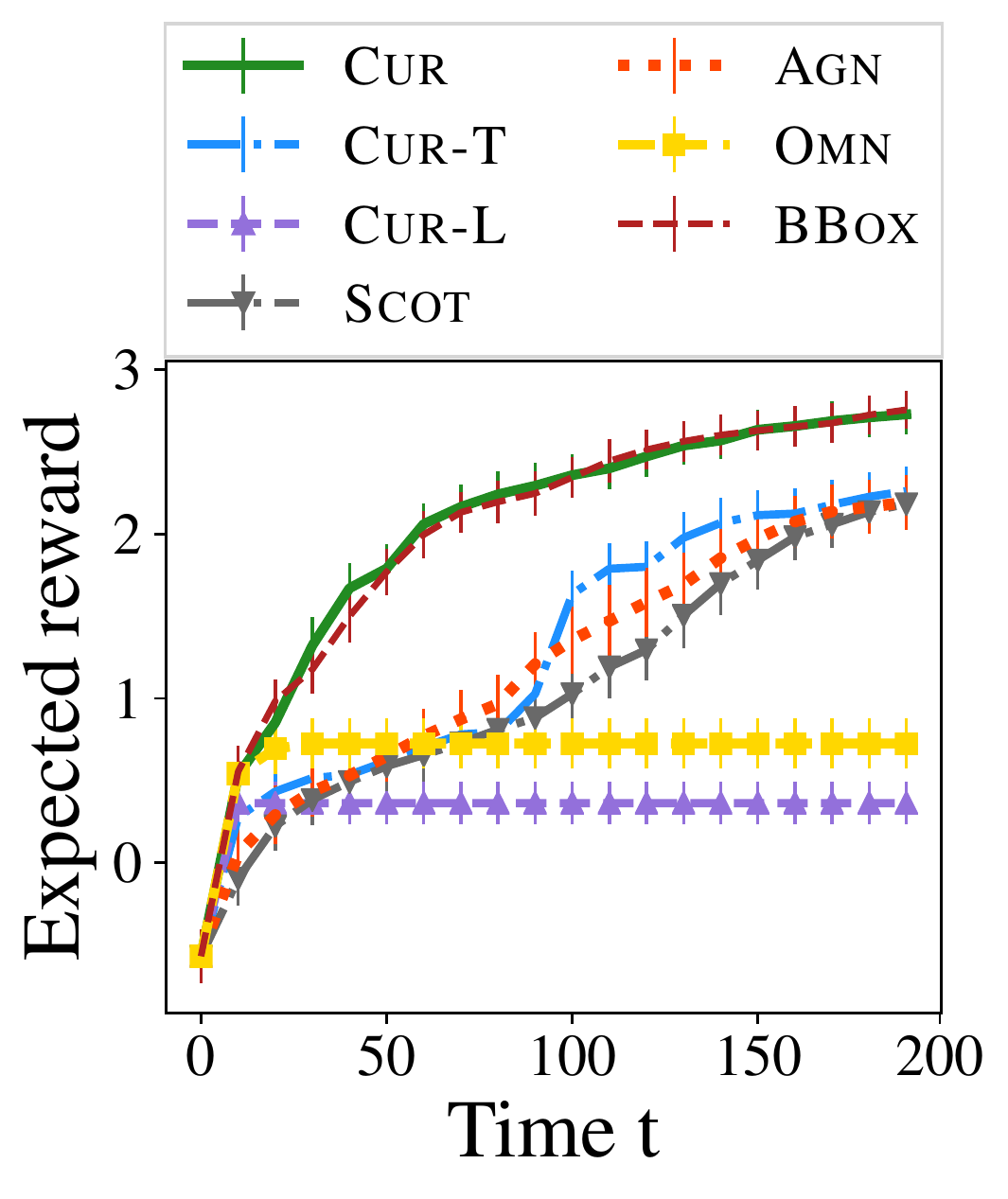}
			\caption{}
			\label{fig:IRL.expected.reward:T0-T3}
		}
	\end{subfigure}
	\ \ \ \
\centering
\begin{subfigure}[b]{0.225\textwidth}
		\centering
		{
			\includegraphics[width=\textwidth]{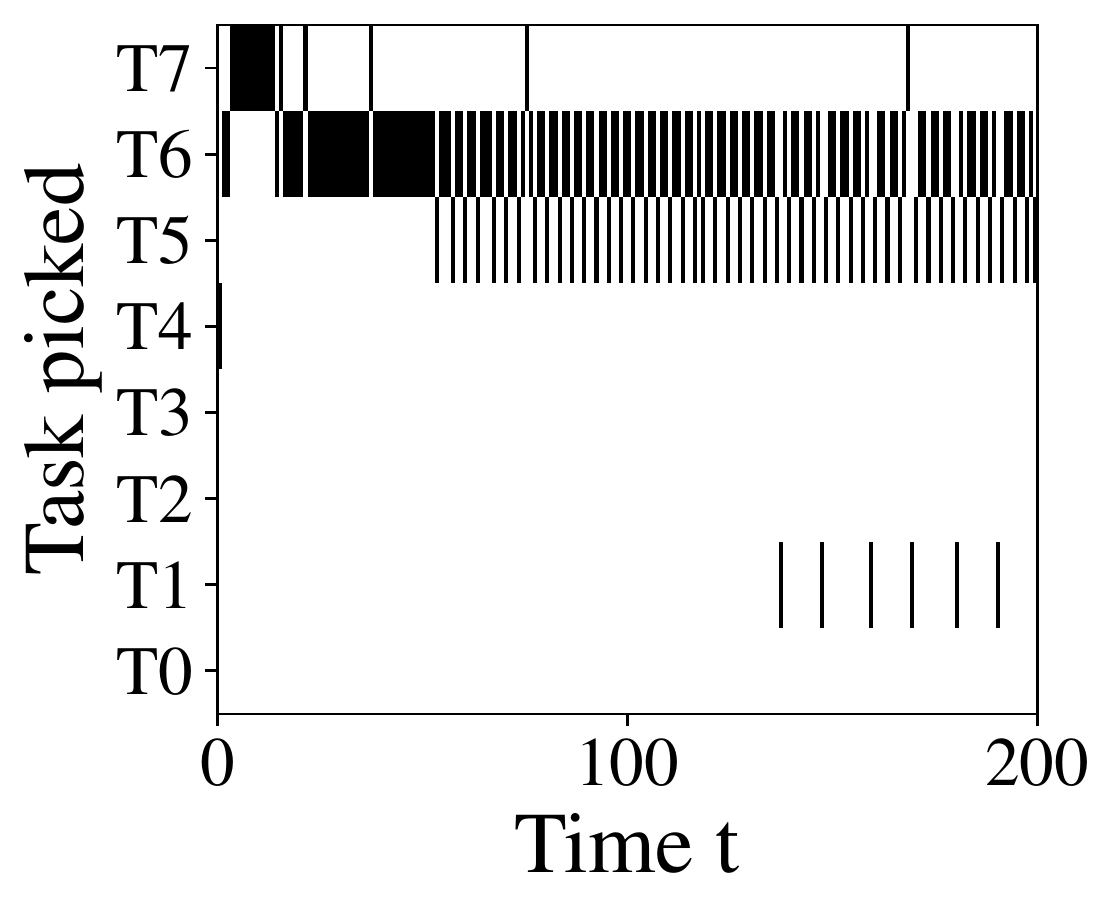}
			\caption{}\label{fig:IRL.curriculum:T0-T3}
		}
	\end{subfigure}
	\caption{Learning curves and curriculum visualization for \irlmodel~learners (with varying initial knowledge) trained on the car driving environment: (a) reward convergence plot and (b) curriculum generated by the \cur~teacher for the learner with initial knowledge of \texttt{T0}; (c) reward convergence plot and (d) curriculum generated by the \cur~teacher for the learner with initial knowledge of \texttt{T0}--\texttt{T3}.}
	\label{fig:IRL.reward_and_curriculum}
    \vspace*{-2.5mm}
\end{figure*}

\begin{figure*}[t!]
\centering
\captionsetup[figure]{aboveskip=0pt,belowskip=-3pt}
\centering
\captionsetup[subfigure]{aboveskip=0pt,belowskip=-4pt}
	\begin{subfigure}[b]{.23\textwidth}
    \centering
	{
		\includegraphics[width=\textwidth]{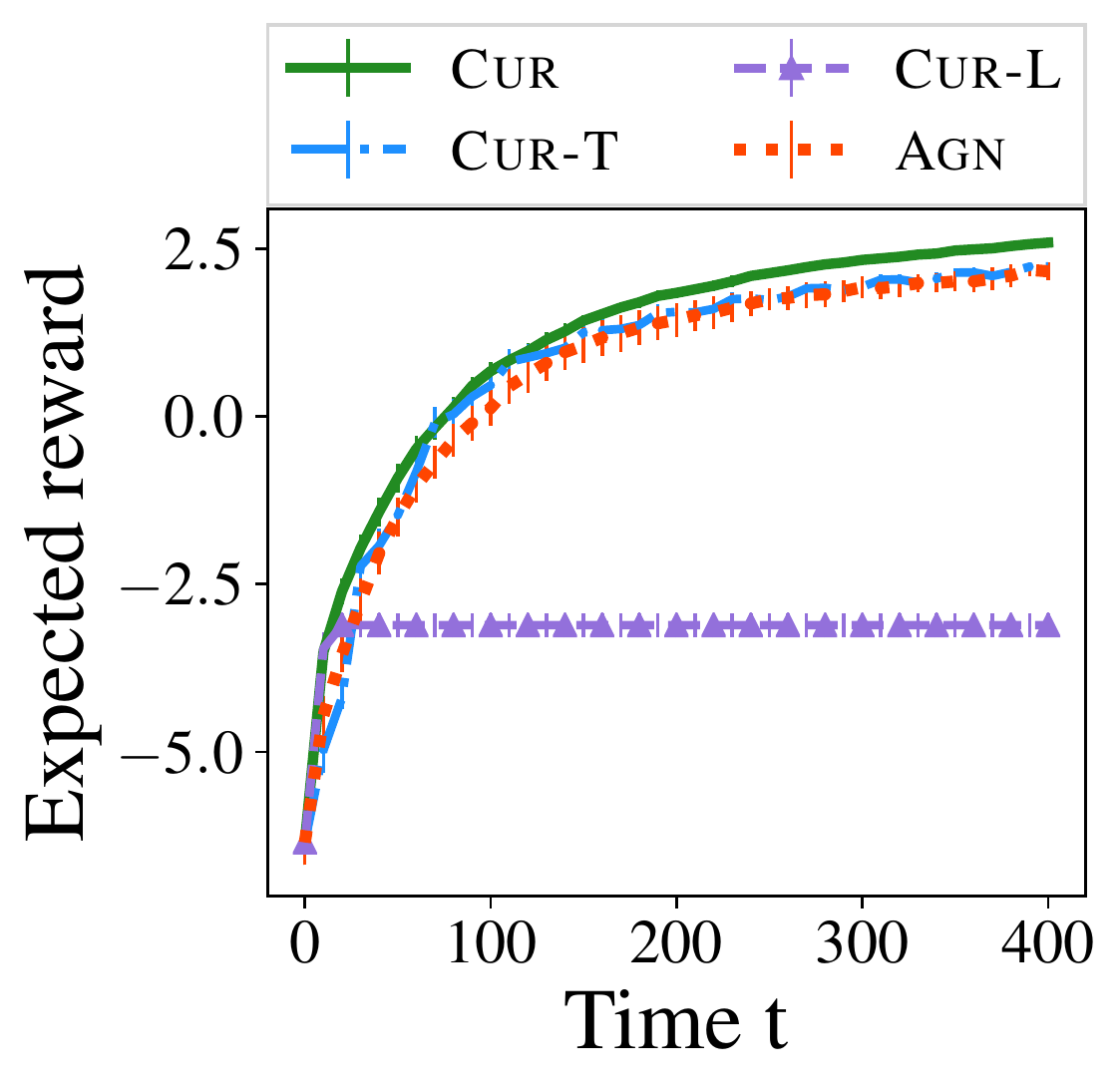}
		\caption{}
		\label{fig:BC.expected.reward:T0}
	}
	\end{subfigure}
	\ \ \ \ 
	\begin{subfigure}[b]{.23\textwidth}
		\centering
		{
			\includegraphics[width=\textwidth]{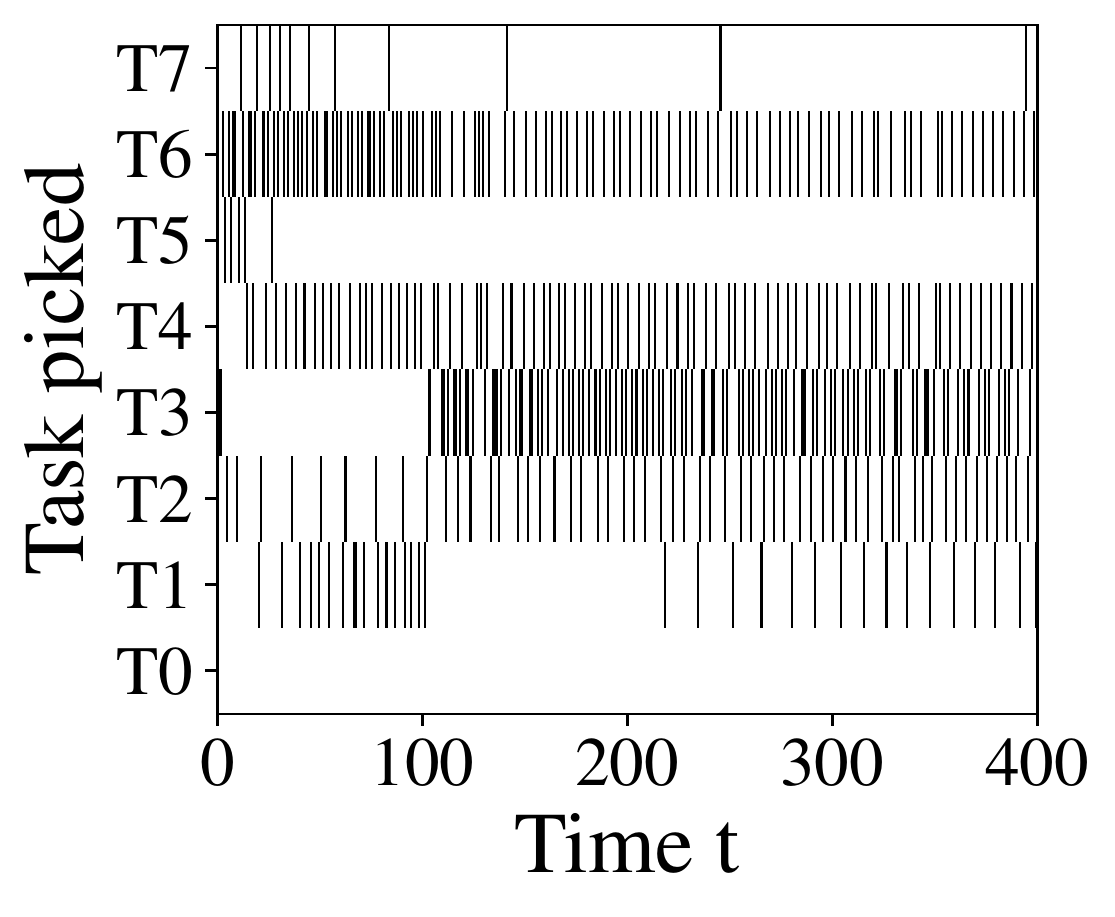}
			\caption{}
			\label{fig:BC.curriculum:T0}
		}
	\end{subfigure}
	\ \ \ \ 
	\begin{subfigure}[b]{.215\textwidth}
		\centering
		{
			\includegraphics[width=\textwidth]{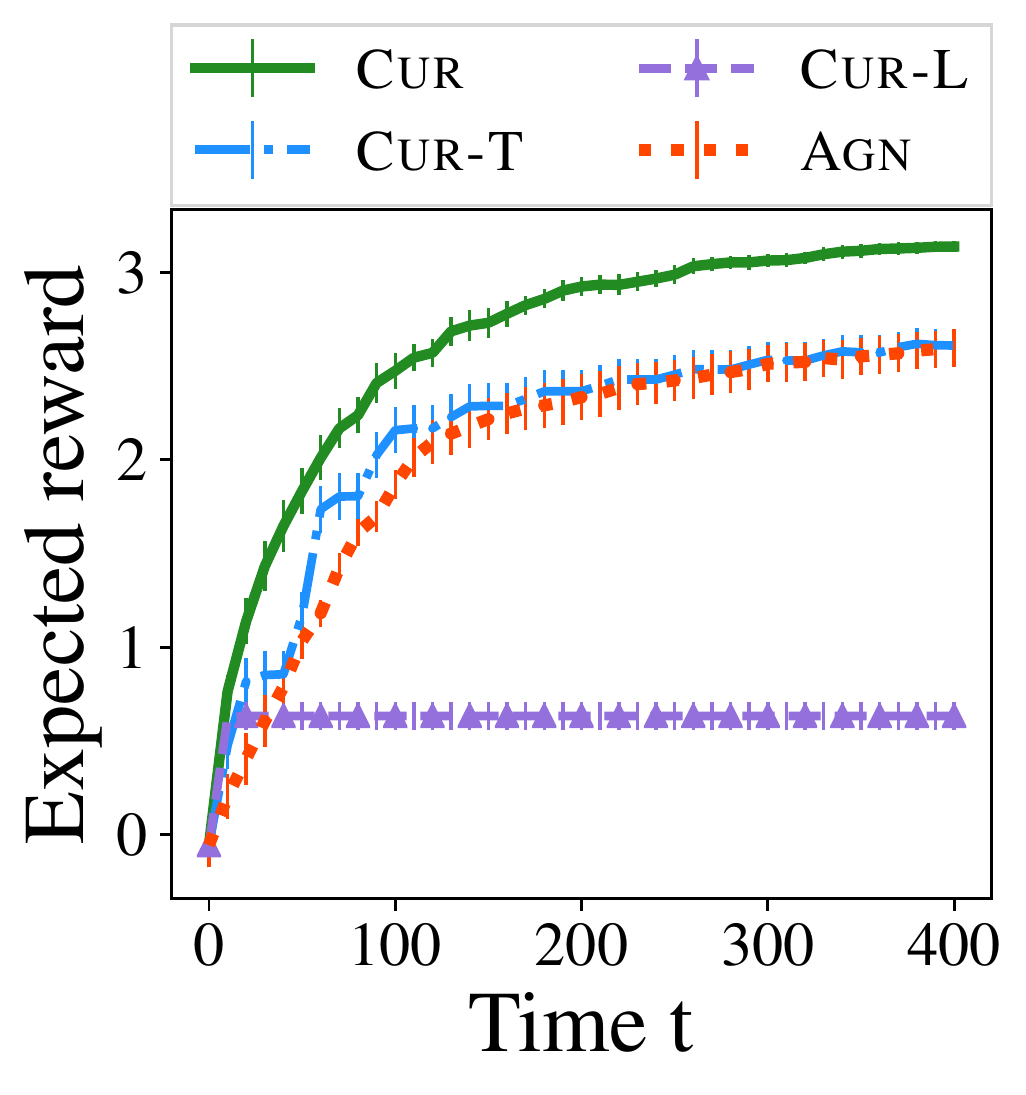}
			\caption{}
			\label{fig:BC.expected.reward:T0-T3}
		}
	\end{subfigure}
	\ \ \ \ 
\centering
\begin{subfigure}[b]{0.23\textwidth}
		\centering
		{
			\includegraphics[width=\textwidth]{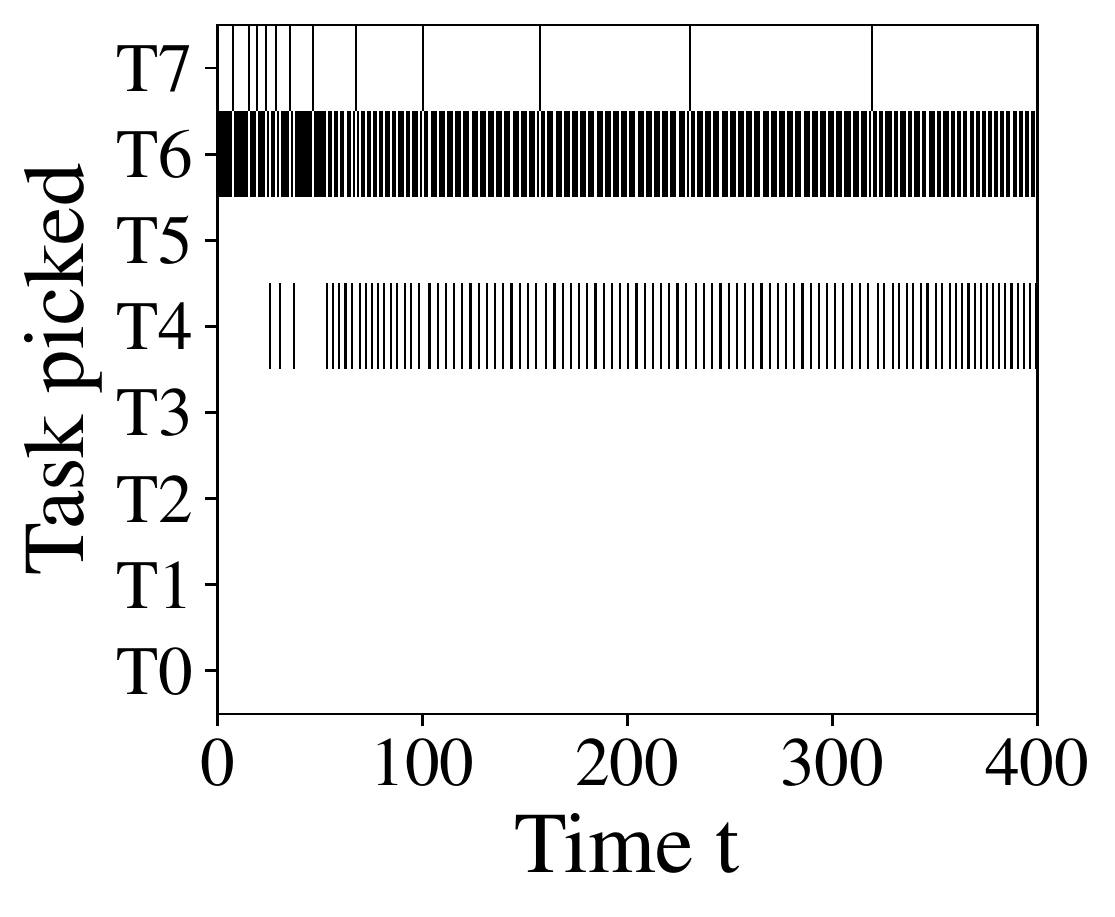}
			\caption{}
			\label{fig:BC.curriculum:T0-T3}
		}
	\end{subfigure}
	\caption{Learning curves and curriculum visualization for \bcmodel~learners (with varying initial knowledge) trained on the car driving environment: (a) reward convergence plot and (b) curriculum generated by the \cur~teacher for the learner with initial knowledge of \texttt{T0}; (c) reward convergence plot and (d) curriculum generated by the \cur~teacher for the learner with initial knowledge of \texttt{T0}--\texttt{T3}.}
	\label{fig:BC.expected.reward}
\end{figure*}

\subsection{Learner Models}
\looseness-1Next, we describe the \irlmodel~and \bcmodel~learner models. For the \irlmodel~learner, we evaluate all the above-mentioned teaching algorithms that include state-of-the-art baselines; for the \bcmodel~learner, we evaluate \cur, \curt, \curl,~and \agn~algorithms.

\vspace{-2mm}
\paragraph{\irlmodel~learner.}
For alignment with the prior state-of-the-art work on teaching sequential \irlmodel~learners \cite{imt_ijcai2019}, we perform \emph{teaching over states} in our experiments. More concretely, at time $t$ the teacher picks a state $s_t$ (where $P_0(s_t) > 0$) and provides all demonstrations starting from $s_t$~to the learner given by \small$\Xi_{s_t} = \bcc{\xi = \bcc{\brr{s^\xi_\tau,a^\xi_\tau}}_{\tau} \text{ s.t. } s^\xi_0 = s_t}$\normalsize.
The gradient $g_t$ of the \irlmodel~learner is then given by
%
$g_t = \mu^{\pi_{\theta_t}, \startstate} - \mu^{\Xi_{\startstate}}$, where (i) $\mu^{\Xi_{\startstate}} := \frac{1}{\abs{\Xi_{\startstate}}} \sum_{\xi \in \Xi_{\startstate}} \mu^\xi$, and (ii) $\mu^{\pi, \startstate}$ is the feature expectation vector of policy $\pi$ with starting state set to $\startstate$ (see Section \ref{subsec:max-ent}).
Based on \cite{imt_ijcai2019}, we consider the learner's feature mapping as $\phi(s, a) = \phi^E(s)$ and the learner uses a non-linear parametric reward function $R^L_\theta\brr{s,a} = \ipp{\theta_{1:d'}}{\phi\brr{s,a}}+\ipp{\theta_{d'+1:2d'}}{\phi\brr{s,a}}^2$ where $d'$ is the dimension of $\phi(s,a)$. As explained in \cite{imt_ijcai2019}, a linear reward representation cannot capture the optimal behaviour for $\mathcal{M}_{\mathrm{car}}$.
We consider learners with varying levels of initial knowledge, i.e., 
the learner is trained on a subset of tasks before the teaching process starts. In this setting, for our curriculum strategy in Eq.~(\ref{eq:curr_scheme}) the difficulty score of a set of demonstrations associated with a starting state $\Xi_{s}$ is computed as the mean difficulty score of individual demonstrations in the set.

\vspace{-3mm}
\paragraph{\bcmodel~learner.}
We consider the \bcmodel~learner model of Section \ref{subsec:cross-ent} as our second learner model. The learner's feature mapping is given by $\phi\brr{s,a} = \E_{s'\sim \mathcal{T}(\cdot | s, a)}[\phi^E(s')]$. A quadratic parametric form is selected for the scoring function, i.e., $H_\theta\brr{s,a} = \ipp{\theta_{1:d'}}{\phi\brr{s,a}}+\ipp{\theta_{d'+1:2d'}}{\phi\brr{s,a}}^2$, where $d'$ is the dimension of $\phi(s, a)$. We consider learners with varying initial knowledge and perform teaching over states similar to the \irlmodel~learner.

\vspace{-0.5mm}
\subsection{Experimental results}
%
\looseness-1Figs. \ref{fig:IRL.expected.reward:T0}, \ref{fig:IRL.expected.reward:T0-T3} and \ref{fig:BC.expected.reward:T0}, \ref{fig:BC.expected.reward:T0-T3} show the convergence of the total expected reward for the \irlmodel~and \bcmodel~learners respectively, averaged over $10$ runs. The \cur~teacher outperforms \omn~despite not requiring information about the learner's dynamics. For non-linear parametric reward functions, the \irlmodel~learner no longer solves a convex optimization problem. As a result, forcing the learner to converge to a fixed parameter doesn't necessarily perform well, as seen by the poor performance of the \omn~teacher in Fig.~\ref{fig:IRL.expected.reward:T0-T3}. The \cur~teacher is competitive with the \bbox~teacher. 
Unlike our \cur~teacher, the \bbox~teacher does require exact access to the true reward function, $R^E$.
The \cur~teacher consistently outperforms the \agn~and \scot~teachers, as well as
both the \curt~and \curl~variants. 

Figs.~\ref{fig:IRL.curriculum:T0}, \ref{fig:IRL.curriculum:T0-T3} and \ref{fig:BC.curriculum:T0}, \ref{fig:BC.curriculum:T0-T3} visualize the curriculum generated by the \cur~teacher for the \irlmodel~and \bcmodel~learners respectively. Here, the curriculum refers to the type of task, \texttt{T0}--\texttt{T7}, associated with the demonstrations provided by the teacher to the learner at time step $t$. For both types of learners we see that at the beginning of training, the teacher focuses on tasks which teach skills the learner is yet to master. For example, in Fig. \ref{fig:BC.curriculum:T0-T3}, the teacher picks tasks \texttt{T4}, \texttt{T6}, and \texttt{T7}, which teaches the learner to avoid grass, pedestrians, and to navigate through police and HOV.
We also notice that the \cur~teacher can identify degradation in performance on previously mastered tasks, e.g., task \texttt{T1} in Fig. \ref{fig:IRL.curriculum:T0-T3}, and corrects for this by picking them again later during training. 

\vspace{-1mm}
\paragraph{Additional results under limited observability.}
\begin{wrapfigure}{r}{0.275\textwidth}
    \vspace{-0.5cm}
    \includegraphics[width=0.275\textwidth]{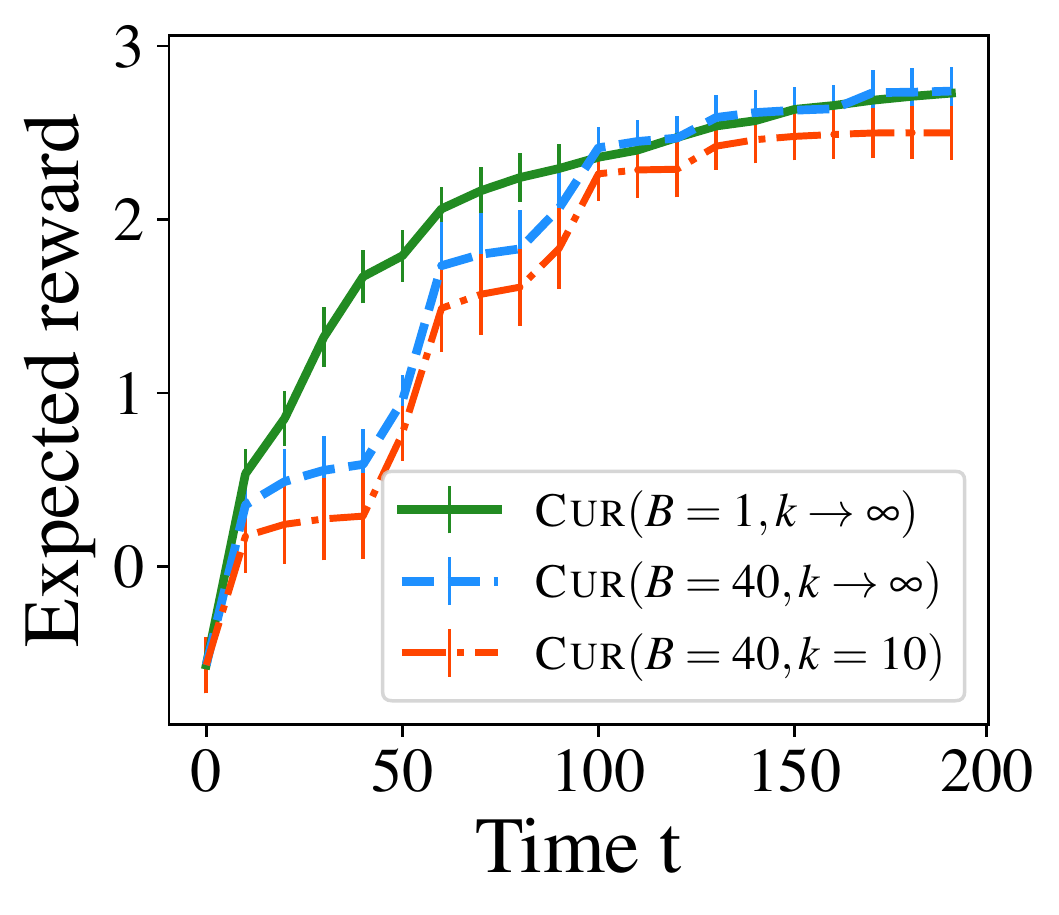}
    \caption{Learning curves for the \irlmodel~learner under limited observability.}
    \label{fig:limited_observability}
\end{wrapfigure}

In the above experiments, we consider the learner's policy to be fully observable by the teacher at every time step. Here, we study the performance of our \cur~teacher under the limited observability setting, similar to \cite{imt_ijcai2019}, where the learner's policy needs to be estimated by probing the learner. The probing process is formally characterized by two parameters, $(B, k)$, where the learner's policy is probed after every $B$ time steps and each probing step corresponds to querying the learner's policy $\pi^L_t$ a total of $k$ times from each state $s\in \mathcal{S}$ in the MDP. The learner's policy, $\pi^L_t(a|s)\;\forall a,s$, is then approximated based on the fraction of the $k$ queries in which the learner performed action $a$ from state $s$. In between every $B$ time steps that the learner is probed, the \cur~teacher does not update its estimate of the learner's policy. We note that the $(B=1, k\to\infty)$ setting corresponds to full observability of the learner.
Fig.~\ref{fig:limited_observability} depicts the performance of the \cur~teacher for different values of $(B,k)$.
Even under limited observability, the \cur~teacher's performance is competitive with the full observability setting. The performance of $(B=40,k\to\infty)$ is even slightly better at certain time steps during later stages of training compared to $(B=1,k\to\infty)$, which is possibly due to the strategy of greedily picking demonstrations not being necessarily optimal. Also, for the limited observability setting it can be interesting to explore approaches that alleviate the need to query the full policy of the learner \cite{jacq2019learning,brown2021value}.

\section{Experimental Evaluation: Learner-Centric Setting}\label{sec:experiment_without_teacher}

\begin{figure}[t!]
\begin{minipage}{0.48\textwidth}
\centering
\begin{subfigure}[b]{0.43\textwidth}
\centering
\includegraphics[width=\textwidth]{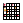}
\caption{}
\label{fig:wsp_grid}
\end{subfigure}
\begin{subfigure}[b]{0.47\textwidth}
\centering
\includegraphics[width=\textwidth]{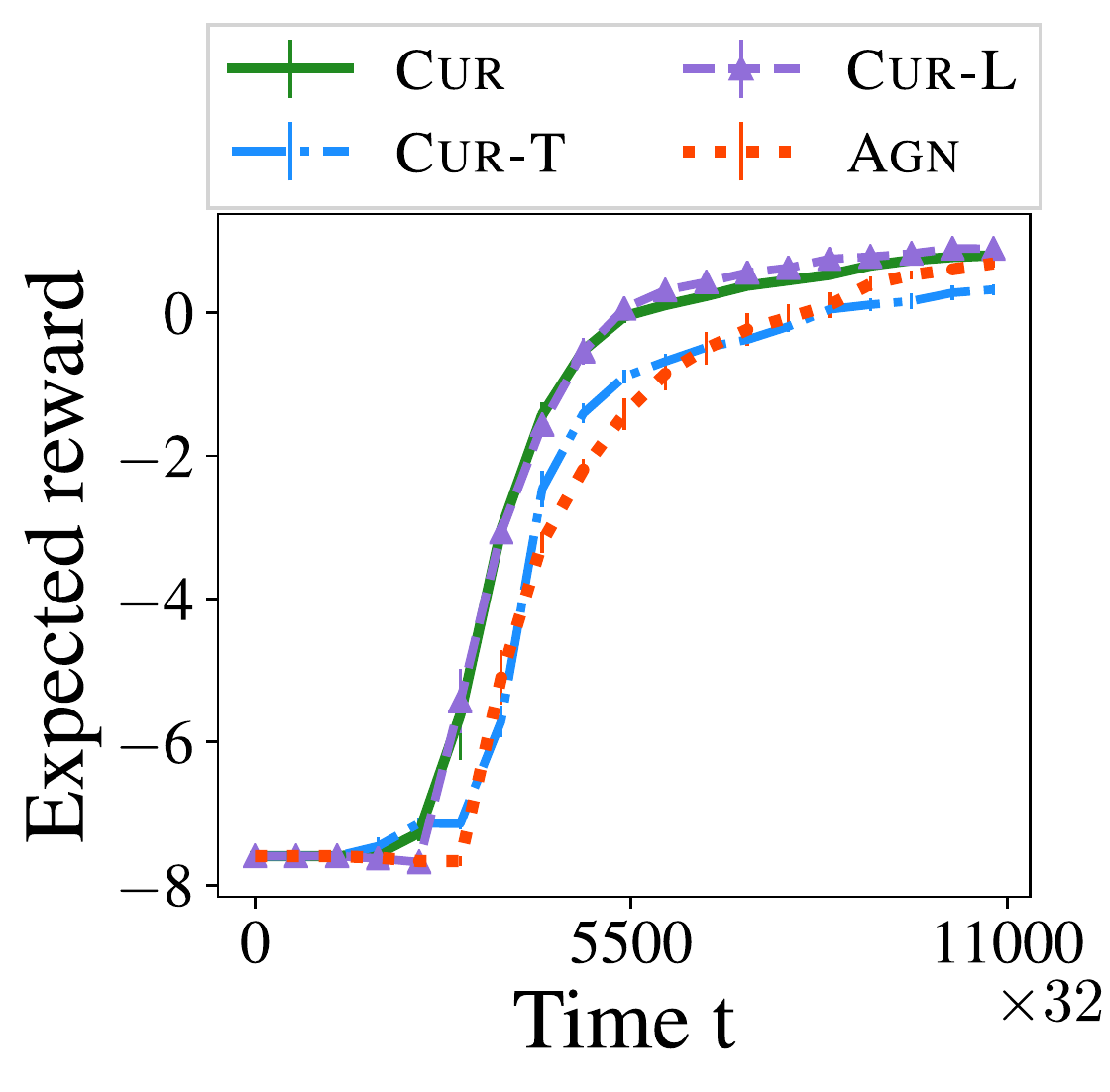}
\caption{}
\label{fig:wsp_convergence}
\end{subfigure}
\caption{Illustration of a shortest path navigation task (left) and  convergence curves (right).}
\label{fig:wsp_figures}
\end{minipage}
\hfill
\begin{minipage}{0.48\textwidth}
\centering
\begin{subfigure}[b]{0.43\textwidth}
\centering
\includegraphics[width=0.98\textwidth]{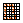}
\caption{}
\label{fig:tsp_grid}
\end{subfigure}
\begin{subfigure}[b]{0.47\textwidth}
\centering
\includegraphics[width=\textwidth]{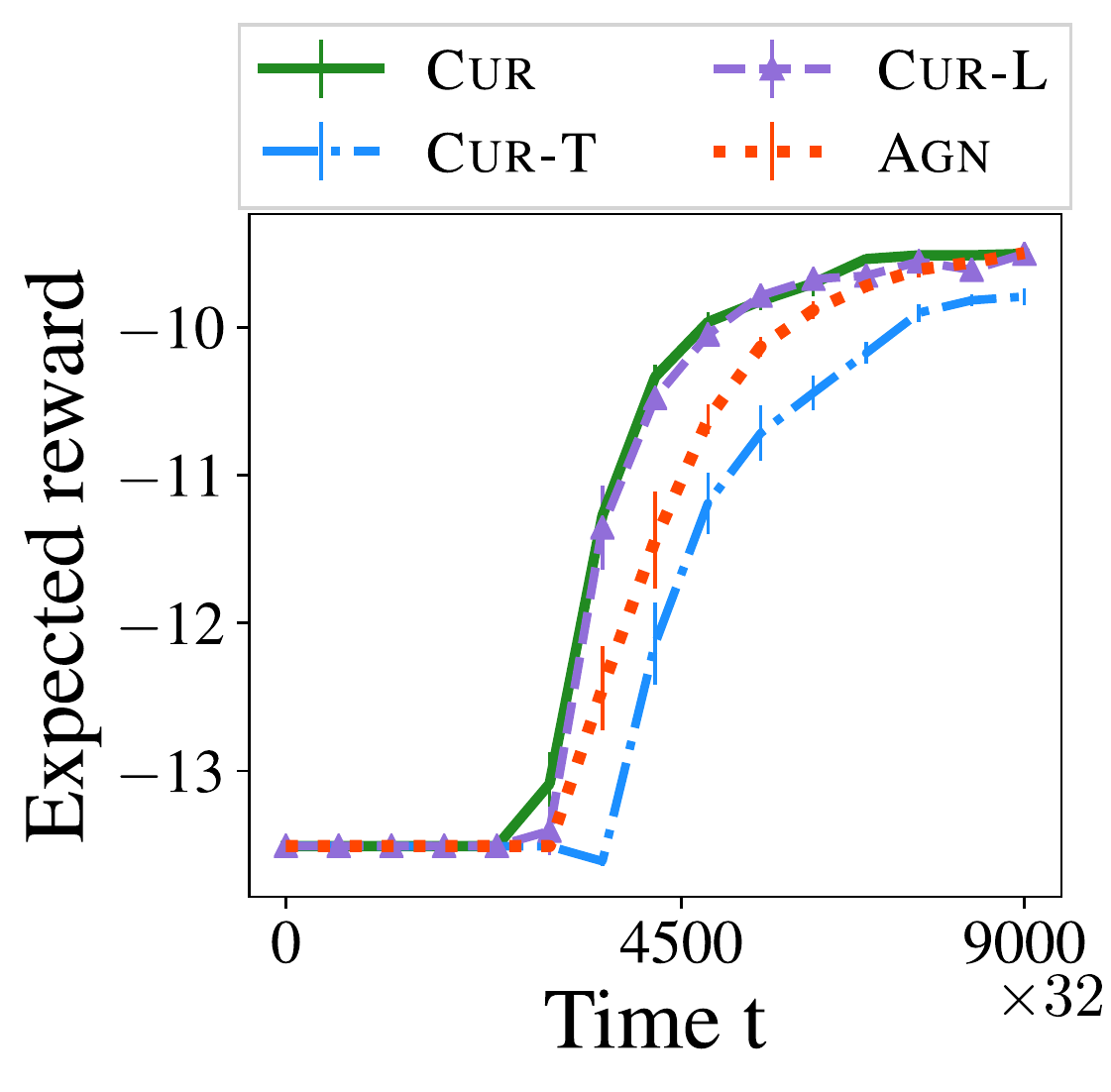}
\caption{}
\label{fig:tsp_convergence}
\end{subfigure}
\caption{Illustration of a travelling salesman navigation task (left) and convergence curves (right).}
\label{fig:tsp_figures}
\end{minipage}
\vspace{-2mm}
\end{figure}

\looseness-1In this section, we evaluate our curriculum strategy in a learner-centric setting, i.e., no teacher agent is present, and the teacher's difficulty $\Psi^E(\xi)$ is expressed by a task-specific difficulty score (see Section \ref{sec:curr-teacher-algo}). We evaluate our approach for training a multi-task neural policy to solve discrete optimization problems. Here, we provide an overview of the results with additional details in the Appendix.

\paragraph{Experiment setup.}
\looseness-1We begin by describing the synthetic navigation-based environments considered in our experiments. Our first navigation environment comprises of tasks based on the shortest path problem \cite{dijkstra1959}. We represent each task with a grid-world (see Fig.~\ref{fig:wsp_grid}) containing goal cells (depicted by stars), and cells with muds and bombs (shown in brown and red respectively). The agent aims to navigate to the closest goal cell while avoiding muds and bombs. Our second navigation environment comprises of tasks inspired by the travelling salesman problem (TSP) \cite{kool2018attention,joshi2019learning} (see Fig.~\ref{fig:tsp_grid}). Again we represent each task with a grid-world, where the agent's goal is to find the shortest tour which visits all goals and returns to its initial location (see Fig.~\ref{fig:tsp_grid}). Orange arrows in Figs.~\ref{fig:wsp_grid} and \ref{fig:tsp_grid} depict the optimal path for the agent. In our experimental setup, we begin by creating a pool of tasks and split them into training and test sets. The curriculum algorithms order the training tasks during the training phase based on their strategy to speed up the learner's progress. The aim of the learner is to learn a multi-task neural policy that can generalize to new unseen tasks in the test set.

\vspace{-1mm}
\paragraph{Curriculum algorithms.}
We compare the performance of four different curriculum algorithms: (i) the \cur~algorithm picks tasks from the training set using Eq.~(\ref{eq:curr_scheme}) where the numerator is $\Psi^L_t$ and the denominator $\Psi^E$ is defined by a task-specific difficulty score (detailed in the Appendix); (ii) the \curl~algorithm picks tasks from the training set using Eq.~(\ref{eq:curr_scheme}) where the numerator is $\Psi^L_t$ and the denominator is set to 1; (iii) the \curt~algorithm picks tasks from the training set using Eq.~(\ref{eq:curr_scheme}) where the numerator is set to 1 and the denominator is set to $\Psi^E$; (iv) the AGN algorithm picks tasks with a uniform distribution over the training set.

\vspace{-1mm}
\paragraph{Learner model.}
\looseness-1We consider a neural \bcmodel~learner (see Section \ref{subsec:cross-ent}). The learner's scoring function $H_{\theta}$ is parameterized by a $6$-layer Convolutional Neural Network (CNN).
The CNN takes as input a feature mapping of the agent's current position in a task, and outputs a score for each action. The learner minimizes the cross-entropy loss between its predictions and the demonstrations.

\vspace{-1mm}
\paragraph{Results.}
Figs.~\ref{fig:wsp_convergence} and \ref{fig:tsp_convergence}, show the reward convergence curves on the test set for the different curriculum algorithms averaged over $5$ runs. The \cur~algorithm leads to faster reward convergence compared to the \agn~algorithm, which is the common approach for training a neural policy. \curl~is competitive with \cur~in this setting which highlights the importance of the learner's difficulty.
\section{Discussion and Conclusions}
\label{sec:conclusion}

We presented a unified curriculum strategy, with theoretical guarantees, for the sequential \irlmodel~and \bcmodel~learner models, based on the concept of difficulty scores. Our proposed strategy is independent of the learner's internal dynamics and is applicable in both teacher-centric and learner-centric settings. Experiments on a synthetic car driving environment and on navigation-based environments demonstrated the effectiveness of our curriculum strategy.

\looseness-1Our work provides theoretical underpinnings of curriculum design for teaching via demonstrations, which can be beneficial in educational applications such as tutoring systems and also for self-curriculum design for imitation learners. As such we do not see any negative societal impact of our work.
Some of the interesting directions for future work include: obtaining convergence bounds for \bcmodel~and other learner models, designing curriculum algorithms for reinforcement learning agents based on the concept of difficulty scores, and designing approaches to efficiently approximate the learner's policy using less queries.


\iftoggle{longversion}{
    \clearpage
    \onecolumn
    \appendix
    {
        \allowdisplaybreaks
\section{List of Appendices}\label{appendix.table-of-contents}
In this section, we provide a brief description of the content in the appendices of the paper.   
\begin{itemize}
\item Appendix~\ref{ap:max-ent} provides proofs for \irlmodel~learner.
\item Appendix~\ref{ap:cross-ent} provides proofs for \bcmodel~learner.
\item Appendix~\ref{ap:experiment_details_nav} provides a detailed description of the synthetic navigation-based experiments.
\end{itemize}


\section{Proofs for \irlmodel~Learner}
\label{ap:max-ent}

\subsection{Auxiliary Lemma}

\begin{lemma}
Consider the \irlmodel~learner defined in Section~\ref{subsec:max-ent}. Then, at time step $t$, we have
\[
- \ipp{\wopt - \w}{g_t} ~=~ \log \frac{\Psi^L_t\brr{\xi_t}}{\Psi^E\brr{\xi_t}} + K_t ,
\]
where $K_t = \log \frac{Z\brr{\wopt}}{Z\brr{\w}} - \ipp{\wopt - \w}{\mu^{\pi_{\theta_t}}}$ is a constant independent of $\xi_t$.
\label{helper-lemma}
\end{lemma}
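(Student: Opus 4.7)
The plan is to unfold both sides of the claimed identity into the closed-form expressions available for the MaxEnt-IRL model and verify they match algebraically. Everything we need is already in the setup: (i) for deterministic dynamics the trajectory likelihood simplifies to $\mathbb{P}(\xi\mid\theta) = P_0(s_0^\xi)\prod_\tau \pi_\theta(a_\tau^\xi\mid s_\tau^\xi)$; (ii) this same quantity also has the exponential-family form $\mathbb{P}(\xi\mid\theta) = \exp(\langle \theta, \mu^\xi\rangle)/Z(\theta)$; and (iii) the MaxEnt-IRL gradient is $g_t = \mu^{\pi_{\theta_t}} - \mu^{\xi_t}$.

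First I would link the difficulty scores to the likelihoods. By Definition~\ref{def:diff-scores} and (i), for any $\theta$ in the deterministic case we have $\Psi_\theta(\xi) = P_0(s_0^\xi)/\mathbb{P}(\xi\mid\theta)$. Consequently the $P_0$ factor cancels in any ratio, and using (ii),
\[
\log\frac{\Psi^L_t(\xi_t)}{\Psi^E(\xi_t)} \;=\; \log\frac{\mathbb{P}(\xi_t\mid\theta^*)}{\mathbb{P}(\xi_t\mid\theta_t)} \;=\; \langle \theta^*-\theta_t,\,\mu^{\xi_t}\rangle \;-\; \log\frac{Z(\theta^*)}{Z(\theta_t)}.
\]
This is the only nontrivial step; everything else is rearrangement.

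Next I would compute $-\langle\theta^*-\theta_t,g_t\rangle$ from (iii):
\[
-\langle\theta^*-\theta_t,g_t\rangle \;=\; \langle\theta^*-\theta_t,\mu^{\xi_t}\rangle \;-\; \langle\theta^*-\theta_t,\mu^{\pi_{\theta_t}}\rangle.
\]
Substituting $\langle\theta^*-\theta_t,\mu^{\xi_t}\rangle$ from the displayed log-ratio identity and collecting the $\xi_t$-independent terms into $K_t := \log\frac{Z(\theta^*)}{Z(\theta_t)} - \langle\theta^*-\theta_t,\mu^{\pi_{\theta_t}}\rangle$ yields the claim.

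The main (and really only) obstacle is step one: being careful that the deterministic-dynamics assumption made in Section~\ref{subsec:max-ent} is what allows the transition factors $\mathcal{T}(s_{\tau+1}^\xi\mid s_\tau^\xi,a_\tau^\xi)$ to collapse to $1$ along any realizable trajectory, so that $\mathbb{P}(\xi\mid\theta)$ really reduces to $P_0(s_0^\xi)\prod_\tau \pi_\theta(a_\tau^\xi\mid s_\tau^\xi)$ and thus agrees (up to the $\xi$-dependent but $\theta$-independent factor $P_0(s_0^\xi)$) with the reciprocal of $\Psi_\theta(\xi)$. Once this bridge between the two representations of $\mathbb{P}(\xi\mid\theta)$ is secured, the rest is a one-line substitution.
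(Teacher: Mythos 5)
Your proof is correct and follows essentially the same route as the paper's: both use the deterministic-dynamics identity $\Psi_\theta(\xi) = P_0(s_0^\xi)/\mathbb{P}(\xi\mid\theta)$ together with the exponential-family form $\mathbb{P}(\xi\mid\theta)=\exp(\ipp{\theta}{\mu^\xi})/Z(\theta)$ to express $\ipp{\theta^*-\theta_t}{\mu^{\xi_t}}$ in terms of the log-ratio of difficulty scores, then substitute into $-\ipp{\theta^*-\theta_t}{g_t}$ with $g_t=\mu^{\pi_{\theta_t}}-\mu^{\xi_t}$. The only cosmetic difference is that you cancel the $P_0$ factor by forming the ratio first, whereas the paper solves for $\ipp{\theta}{\mu^{\xi_t}}$ and lets the $P_0$ terms cancel in the subtraction.
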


\begin{proof}
Consider the following:
\begin{align*}
\Psi_\theta \brr{\xi_t} ~=~& \frac{1}{\prod_{\tau} \pi_{\theta}\brr{a_\tau^{\xi_t} | s_\tau^{\xi_t}}} \\
~=~& \frac{P_0 (s_0^{\xi_t})}{P_0 (s_0^{\xi_t}) \cdot \prod_{\tau} \pi_{\theta}\brr{a_\tau^{\xi_t} | s_\tau^{\xi_t}}} \\
~\stackrel{(i)}{=}~& \frac{P_0 (s_0^{\xi_t})}{P_0 (s_0^{\xi_t}) \cdot \prod_{\tau} \pi_{\theta}\brr{a_\tau^{\xi_t} | s_\tau^{\xi_t}} \cdot \mathcal{T} \brr{s_{\tau + 1}^{\xi_t} | s_\tau^{\xi_t} , a_\tau^{\xi_t}}} \\
~=~& \frac{P_0 (s_0^{\xi_t})}{\mathbb{P}\brr{\xi_t | \theta}}  \\
~=~& \frac{P_0 (s_0^{\xi_t}) \cdot Z\brr{\theta}}{\exp\brr{\ipp{\theta}{\mu^{\xi_t}}}}
\end{align*}
where (i) is due to the deterministic transition dynamics of the MDP. Thus, we have:
\[
\ipp{\theta}{\mu^{\xi_t}} ~=~ \log \frac{P_0 (s_0^{\xi_t}) \cdot Z\brr{\theta}}{\Psi_\theta \brr{\xi_t}} .
\]
Then, we get:
\begin{align*}
\ipp{\wopt - \w}{\mu^{\xi_t} - \mu^{\pi_{\theta_t}}} ~=~& \log \frac{P_0 (s_0^{\xi_t}) \cdot Z\brr{\wopt}}{\Psi_{\wopt} \brr{\xi_t}} - \log \frac{P_0 (s_0^{\xi_t}) \cdot Z\brr{\w}}{\Psi_{\w} \brr{\xi_t}} - \ipp{\wopt - \w}{\mu^{\pi_{\theta_t}}} \\
~=~& \log \frac{\Psi_{\w} \brr{\xi_t}}{\Psi_{\wopt} \brr{\xi_t}} \cdot \frac{Z\brr{\wopt}}{Z\brr{\w}} - \ipp{\wopt - \w}{\mu^{\pi_{\theta_t}}} \\
~=~& \log \frac{\Psi^L_t\brr{\xi_t}}{\Psi^E\brr{\xi_t}} + K_t ,
\end{align*}
where $K_t = \log \frac{Z\brr{\wopt}}{Z\brr{\w}} - \ipp{\wopt - \w}{\mu^{\pi_{\theta_t}}}$ is a constant independent of $\xi_t$.
\end{proof}

\subsection{Proof of Theorem~\ref{prop:curr}}
\label{ap:teaching_objective_derivation}

\paragraph{Technical conditions.}

Let $\Theta = \mathbb{R}^d$, and for each time step $t$, the learning rate $\eta_t$ satisfies the following condition:
\begin{equation}
\eta_t^2 \norm{g_t}^2 ~\ll~ 2 \eta_t \abs{\ipp{\wopt - \w}{g_t}} , 
\label{eq:eta-condition-irl}
\end{equation}
where $g_t$ is given in Section \ref{subsec:max-ent}. We decompose the gradient as $g_t = \kappa \brr{\wopt - \w} + \delta$, where $\delta \perp \brr{\wopt - \w}$, and $\kappa \in \mathbb{R}$. Then, the above condition can be reduced to the following:
\begin{align*}
\eta_t^2 \brr{\abs{\kappa}^2 \norm{\wopt - \w}^2 + \norm{\delta}^2} ~\ll~& 2 \eta_t \abs{\kappa} \norm{\wopt - \w}^2 \\
\implies \quad \eta_t ~\ll~& \frac{2 \abs{\kappa} \norm{\wopt - \w}^2}{\abs{\kappa}^2 \norm{\wopt - \w}^2 + \norm{\delta}^2} .
\end{align*}
When the gradient $g_t$ primarily aligns with $\pm \brr{\wopt - \w}$, and has a small magnitude to control variance, the above condition further simplifies as follows:
\[
\eta_t ~\ll~ \frac{2}{\abs{\kappa}} .
\]
Smaller values of $\abs{\kappa}$ would impose less stringent condition on $\eta_t$. From Lemma~\ref{helper-lemma}, one can easily observe that our curriculum strategy indeed aims to align the gradient $g_t$ with $- \brr{\wopt - \w}$:  
\[
\argmax_\xi \frac{\Psi^L_t\brr{\xi}}{\Psi^E\brr{\xi}} ~=~ \argmax_\xi \log \frac{\Psi^L_t\brr{\xi}}{\Psi^E\brr{\xi}} ~=~ \argmax_\xi \bcc{- \ipp{\wopt - \w}{g_t\brr{\xi}}} . 
\]
We remark that these technical conditions are only required for our theoretical analysis, and not for our experiments. 

\begin{proof}
Consider the following:
\begin{align}
\Delta_t \brr{\psi^E, \psi^L} ~=~& \Expectover{\xi_t \mid \psi^E, \psi^L}{\norm{\wopt - \w}^2 - \norm{\wopt - \wnext \brr{\xi_t}}^2} \nonumber \\
~=~& \Expectover{\xi_t \mid \psi^E, \psi^L}{\norm{\wopt - \w}^2 - \norm{\wopt - \w + \eta_t g_t}^2} \nonumber \\
~=~& \Expectover{\xi_t \mid \psi^E, \psi^L}{- \eta_t^2 \norm{g_t}^2 - 2 \eta_t \ipp{\wopt - \w}{g_t}} \nonumber \\
~\stackrel{(i)}{\approx}~& 2 \eta_t \Expectover{\xi_t \mid \psi^E, \psi^L}{- \ipp{\wopt - \w}{g_t}} \nonumber \\
~\stackrel{(ii)}{=}~& 2 \eta_t \Expectover{\xi_t \mid \psi^E, \psi^L}{\log \frac{\Psi^L_t\brr{\xi_t}}{\Psi^E\brr{\xi_t}} + K_t} \nonumber \\
~=~& 2 \eta_t \log \frac{\psi^L}{\psi^E} + 2 \eta_t K_t , \label{eq:irl_conv_rate_diff}
\end{align}
where the approximation (i) is due to Eq.~\eqref{eq:eta-condition-irl}, and (ii) is due to Lemma~\ref{helper-lemma}. Then, from~\eqref{eq:irl_conv_rate_diff}, we have:
\begin{align*}
\frac{\partial \Delta_t}{\partial\psi^E} ~\approx~& - \frac{2 \eta_t}{\psi^E} ~<~ 0 , \text{ and } \\
\frac{\partial \Delta_t}{\partial\psi^L} ~\approx~& \frac{2 \eta_t}{\psi^L} ~>~ 0 .
\end{align*}
\end{proof}


\subsection{Proof of Theorem~\ref{thm:teach-complexity}}
\label{ap:teaching_complexity}

\begin{proof}
From Lemma~\ref{helper-lemma}, we have that 
\[
\argmax_\xi \ipp{\wopt - \w}{\mu^{\xi}} ~=~ \argmax_\xi \log \frac{\Psi^L_t\brr{\xi}}{\Psi^E\brr{\xi}} ~=~ \argmax_\xi \frac{\Psi^L_t\brr{\xi}}{\Psi^E\brr{\xi}} .
\]
Thus, our curriculum teaching algorithm picks the demonstration to provide by optimizing the following objective:
\[
\xi_t ~\gets~ \argmax_\xi \ipp{\wopt - \w}{\mu^{\xi}} . 
\]
For a bounded feature mapping $\phi$, we have that $\norm{\mu^\xi} \leq L$, $\forall \xi$. Any optimal solution $\xi_t$ to the above problem satisfies: $\mu^{\xi_t} = \frac{L}{\norm{\wopt - \w}} \brr{\wopt - \w}$. Since in our setting the teacher's demonstrations are restricted to trajectories obtained by executing policy $\pi^E$ in the MDP $\mathcal{M}$, we assume that within the set of available teacher's demonstrations, the optimal feature vector has the following form~\cite{liu2017iterative,imt_ijcai2019}:
\[
\mu^{\xi_t} ~=~ \beta_t \brr{\wopt - \w} + \delta_t,
\]
where $\beta_t \in \bss{0, \frac{L}{\norm{\wopt - \w}}}$ bounds the magnitude of the gradient in the desired direction of $\brr{\wopt - \w}$, and $\delta_t$ represents the deviation from the desired direction, s.t. $\Delta = \max_t \norm{\delta_t}$. We further define the following terms:  $z_{\max} = \max_t \norm{\wopt - \w}$, $\eta_{\max} = \max_t \eta_t$, and $\beta = \min_t \eta_t \beta_t$. 

Consider the following:
\begin{align*}
\norm{\wopt - \wnext}^2 ~=~& \norm{\wopt - \brr{\w + \eta_t \mu^{\xi_t} - \eta_t \mu^{\pi^L_t}}}^2 \\
~=~& \norm{\wopt - \w}^2 + \eta_t^2 \norm{\mu^{\xi_t} - \mu^{\pi^L_t}}^2 - 2 \eta_t \ipp{\wopt - \w}{\mu^{\xi_t} - \mu^{\pi^L_t}} \\
~=~& \norm{\wopt - \w}^2 + \eta_t^2 \norm{\beta_t \brr{\wopt - \w} + \delta_t - \mu^{\pi^L_t}}^2 - 2 \eta_t \ipp{\wopt - \w}{\beta_t \brr{\wopt - \w} + \delta_t - \mu^{\pi^L_t}} \\
~=~& \norm{\wopt - \w}^2 + \eta_t^2 \beta_t^2 \norm{\wopt - \w}^2 + \eta_t^2 \norm{\delta_t}^2 + \eta_t^2 \norm{\mu^{\pi^L_t}}^2 + 2 \eta_t^2 \beta_t \ipp{\wopt - \w}{\delta_t} - 2 \eta_t^2 \beta_t \ipp{\wopt - \w}{\mu^{\pi^L_t}} \\
& - 2 \eta_t^2 \ipp{\delta_t}{\mu^{\pi^L_t}} - 2 \eta_t \beta_t \norm{\wopt - \w}^2 - 2 \eta_t \ipp{\wopt - \w}{\delta_t} + 2 \eta_t \ipp{\wopt - \w}{\mu^{\pi^L_t}} \\
~\stackrel{(i)}{\leq}~& \brr{1 + \eta_t^2 \beta_t^2 - 2 \eta_t \beta_t} \norm{\wopt - \w}^2 + \eta_t^2 \bss{\Delta^2 + L^2} \\
& + 2 \eta_t \brr{1 - \eta_t \beta_t} \Delta \norm{\wopt - \w} + 2 \eta_t \brr{1 - \eta_t \beta_t} L \norm{\wopt - \w} + 2 \eta_t^2 \Delta L \\
~\stackrel{(ii)}{\leq}~& \brr{1 - \eta_t \beta_t}^2 \norm{\wopt - \w}^2 + \eta_t^2 \brr{\Delta + L}^2 + 2 \eta_t \brr{1 - \eta_t \beta_t} \brr{\Delta + L} z_{\max} \\
~\stackrel{(iii)}{\leq}~& \brr{1 - \beta}^2 \norm{\wopt - \w}^2 + \eta_{\max}^2 \brr{\Delta + L}^2 + 2 \eta_{\max} \brr{1 - \beta} \brr{\Delta + L} z_{\max} \\
~\stackrel{(iv)}{\leq}~& \brr{1 - \beta}^2 \norm{\wopt - \w}^2 + \eta_{\max} \bcc{1 + 2 \brr{1 - \beta} z_{\max}} \brr{\Delta + L} ,
\end{align*}
where (i) uses the inequalities $\norm{\mu^{\xi_t}} \leq L$, and $\norm{\delta_t} \leq \Delta$, along with the Cauchy-Schwarz inequality; (ii) utilizes the fact that $\norm{\wopt - \w} \leq z_{\max}$; (iii) is obtained by substituting $\beta = \min_t \eta_t \beta_t$, and $\eta_{\max} = \max_t \eta_t$; (iv) is obtained when $\eta_{\max} \brr{\Delta + L} \leq 1$. Note that the inequality (i) is valid when $1 - \eta_t \beta_t > 0$, $\forall t$.

With the inequality $\sqrt{a + b} \leq \sqrt{a} + \sqrt{b}$ for positive $a, b$, and utilizing recurrence, we obtain:
\begin{align*}
\norm{\wopt - \wnext} ~\leq~& \brr{1 - \beta} \norm{\wopt - \w} + \sqrt{\eta_{\max} \bcc{1 + 2 \brr{1 - \beta} z_{\max}} \brr{\Delta + L}} \\
~\leq~& \brr{1 - \beta}^t \norm{\wopt - \theta_1} + \sqrt{\eta_{\max} \bcc{1 + 2 \brr{1 - \beta} z_{\max}} \brr{\Delta + L}} \sum_{s=0}^{\infty}\brr{1-\beta}^s \\
~=~& \brr{1 - \beta}^t \norm{\wopt - \theta_1} + \sqrt{\eta_{\max} \bcc{1 + 2 \brr{1 - \beta} z_{\max}} \brr{\Delta + L}} \cdot \frac{1}{\beta} \\
~\leq~& \frac{\epsilon}{2} + \frac{\epsilon}{2} ~=~ \epsilon ,
\end{align*}
for $t = \brr{\log \frac{1}{1-\beta}}^{-1} \log \frac{2 \norm{\wopt - \theta_1}}{\epsilon} = \mathcal{O} \brr{\log \frac{1}{\epsilon}}$, and $\eta_{\max}\brr{\Delta + L} \leq \frac{{\epsilon}^2 \beta^2}{4 \bcc{1 + 2 \brr{1 - \beta} z_{\max}}}$.
\end{proof}


\section{Proofs for \bcmodel~Learner}
\label{ap:cross-ent}

\subsection{Auxiliary Lemma}

\begin{lemma}
Consider the \bcmodel~learner defined in Section~\ref{subsec:cross-ent}. Then, at time step $t$, we have
\[
- \ipp{\wopt - \w}{g_t} ~\approx~ \log \frac{\Psi^L_t\brr{\xi_t}}{\Psi^E\brr{\xi_t}} .
\]
\label{helper-lemma-cross}
\end{lemma}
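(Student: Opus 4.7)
\textbf{Proof plan for Lemma~\ref{helper-lemma-cross}.} The plan is to expand the log-difficulty-ratio directly from the softmax parametrization, then recognize that the teacher-minus-learner gradient term is exactly the first-order Taylor residual of the log-partition function expanded around $\w$. This is the \bcmodel~analogue of Lemma~\ref{helper-lemma}, but because the softmax normalizer does not factor out as cleanly as the \irlmodel~partition function $Z(\theta)$, an approximation is unavoidable.

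\textbf{Step 1: rewrite the log-ratio of difficulty scores.} By Definition~\ref{def:diff-scores} and the softmax form $\pi_\theta(a|s)=\exp(H_\theta(s,a))/\sum_{a'}\exp(H_\theta(s,a'))$ with $H_\theta(s,a)=\ipp{\theta}{\phi(s,a)}$, I would introduce the per-step log-partition function
\[
F_\tau(\theta) ~:=~ \log \sum_{a'} \exp\brr{\ipp{\theta}{\phi(s_\tau^{\xi_t},a')}},
\]
so that $-\log \pi_\theta(a_\tau^{\xi_t}|s_\tau^{\xi_t}) = F_\tau(\theta)-\ipp{\theta}{\phi(s_\tau^{\xi_t},a_\tau^{\xi_t})}$. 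Taking logs of the ratio $\Psi^L_t(\xi_t)/\Psi^E(\xi_t)$ yields
\[
\log \frac{\Psi^L_t\brr{\xi_t}}{\Psi^E\brr{\xi_t}} ~=~ \sum_{\tau} \bss{F_\tau(\w)-F_\tau(\wopt)} - \ipp{\w-\wopt}{\sum_\tau \phi(s_\tau^{\xi_t},a_\tau^{\xi_t})}.
\]

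\textbf{Step 2: connect the gradient $g_t$ to $\nabla F_\tau$.} A standard computation for log-sum-exp gives $\nabla_\theta F_\tau(\theta) = \Expectover{a\sim\pi_\theta(\cdot|s_\tau^{\xi_t})}{\phi(s_\tau^{\xi_t},a)}$. Comparing with the expression for $g_t$ in Section~\ref{subsec:cross-ent}, this means
\[
g_t ~=~ \sum_\tau \nabla F_\tau(\w) - \sum_\tau \phi(s_\tau^{\xi_t},a_\tau^{\xi_t}).
\]

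\textbf{Step 3: apply a first-order Taylor expansion.} Expanding $F_\tau(\wopt)$ around $\w$,
\[
F_\tau(\wopt) ~\approx~ F_\tau(\w) + \ipp{\nabla F_\tau(\w)}{\wopt-\w},
\]
and summing over $\tau$ converts the bracketed term in Step~1 into $-\ipp{\sum_\tau \nabla F_\tau(\w)}{\wopt-\w}$. Substituting and collecting coefficients of $(\wopt-\w)$:
\[
\log \frac{\Psi^L_t\brr{\xi_t}}{\Psi^E\brr{\xi_t}} ~\approx~ \ipp{\w-\wopt}{\sum_\tau \nabla F_\tau(\w) - \sum_\tau \phi(s_\tau^{\xi_t},a_\tau^{\xi_t})} ~=~ -\ipp{\wopt-\w}{g_t},
\]
which is the claim. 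The rest of the Taylor series, involving the Hessian of $F_\tau$ (the covariance of $\phi$ under $\pi_\w$), is the source of the $\approx$; this can be made rigorous either by invoking the smallness of $\norm{\wopt-\w}$ near convergence, or by showing that the quadratic remainder is dominated by the same quantity already absorbed in the step-size assumption $\eta_t\norm{g_t}^2 \ll 2\abs{\ipp{\wopt-\w}{g_t}}$.

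\textbf{Anticipated difficulty.} The main obstacle is not the algebra but the justification of the linearization: unlike the \irlmodel~case, where $Z(\theta)$ depends only on $\theta$ and cancels into the constant $K_t$ exactly, here $F_\tau$ depends on $\theta$ through the same linear form that defines the loss, so the discrepancy is genuinely second-order in $\wopt-\w$. I would therefore state the result as ``after first-order approximation,'' consistent with how Theorem~\ref{prop:curr-il} already flags this, and keep the remainder terms implicit rather than attempting a quantitative bound.
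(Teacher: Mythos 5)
Your proposal is correct and follows essentially the same route as the paper's proof: write $-\log\pi_\theta$ per step as a log-sum-exp minus a linear term, Taylor-expand the log-partition to first order around $\theta_t$ (whose gradient is the expected feature under $\pi_{\theta_t}$), and identify the result with $-\ipp{\wopt-\w}{g_t}$. Your explicit bookkeeping via $F_\tau$ is if anything slightly cleaner than the paper's displayed computation, whose Taylor step carries a sign typo that your version implicitly corrects.
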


\begin{proof}
Consider the following:
\begin{align*}
\log \Psi_\theta \brr{\xi_t} ~=~& - \log \prod_{\tau} \pi_{\theta}\brr{a_\tau^{\xi_t} | s_\tau^{\xi_t}} \\
~=~& - \sum_{\tau} \log \pi_{\theta}\brr{a_\tau^{\xi_t} | s_\tau^{\xi_t}} \\
~=~& \sum_{\tau} \log \sum_{a'} \exp \brr{H_\theta \brr{s_\tau^{\xi_t} , a'}} - \sum_{\tau} H_\theta \brr{s_\tau^{\xi_t} , a_\tau^{\xi_t}} \\
~\stackrel{(i)}{\approx}~& \sum_{\tau} \log \sum_{a'} \exp \brr{H_{\theta_t} \brr{s_\tau^{\xi_t} , a'}} - \ipp{\theta - \theta_t}{\sum_\tau \Expectover{a' \sim \pi_{\theta_t} \brr{\cdot | s_\tau^{\xi_t}}}{\phi(s_\tau^{\xi_t}, a')}} - \ipp{\theta}{\sum_\tau \phi(s_\tau^{\xi_t}, a_\tau^{\xi_t})}
\end{align*}
where (i) is due to the first-order Taylor approximation of $\sum_{\tau} \log \sum_{a'} \exp \brr{H_\theta \brr{s_\tau^{\xi_t} , a'}}$ around $\theta_t$. Then, we have:
\begin{align*}
\log \frac{\Psi_{\theta_t} \brr{\xi_t}}{\Psi_{\theta^*} \brr{\xi_t}} ~=~& \log \Psi_{\theta_t} \brr{\xi_t} - \log \Psi_{\theta^*} \brr{\xi_t} \\
~\approx~& \ipp{\theta^* - \theta_t}{\sum_\tau \phi(s_\tau^{\xi_t}, a_\tau^{\xi_t}) - \sum_\tau \Expectover{a' \sim \pi_{\theta_t} \brr{\cdot | s_\tau^{\xi_t}}}{\phi(s_\tau^{\xi_t}, a')}} \\
~=~& - \ipp{\wopt - \w}{g_t}
\end{align*}
\end{proof}

\subsection{Proof of Theorem~\ref{prop:curr-il}}
\label{ap:bc-curr-theorem}

\paragraph{Technical conditions.}

Let $\Theta = \mathbb{R}^d$, and for each time step $t$, the learning rate $\eta_t$ satisfies the following condition:
\begin{equation}
\eta_t^2 \norm{g_t}^2 ~\ll~ 2 \eta_t \abs{\ipp{\wopt - \w}{g_t}} , 
\label{eq:eta-condition-bc}
\end{equation}
where $g_t$ is the gradient of the \bcmodel~learner as given in section \ref{subsec:cross-ent}. We can further simplify the above condition, similar to Section~\ref{ap:teaching_objective_derivation}.

\begin{proof}
Consider the following:
\begin{align}
\Delta_t \brr{\psi^E, \psi^L} ~=~& \Expectover{\xi_t \mid \psi^E, \psi^L}{\norm{\wopt - \w}^2 - \norm{\wopt - \wnext \brr{\xi_t}}^2} \nonumber \\
~=~& \Expectover{\xi_t \mid \psi^E, \psi^L}{\norm{\wopt - \w}^2 - \norm{\wopt - \w + \eta_t g_t}^2} \nonumber \\
~=~& \Expectover{\xi_t \mid \psi^E, \psi^L}{- \eta_t^2 \norm{g_t}^2 - 2 \eta_t \ipp{\wopt - \w}{g_t}} \nonumber \\
~\stackrel{(i)}{\approx}~& 2 \eta_t \Expectover{\xi_t \mid \psi^E, \psi^L}{- \ipp{\wopt - \w}{g_t}} \nonumber \\
~\stackrel{(ii)}{\approx}~& 2 \eta_t \Expectover{\xi_t \mid \psi^E, \psi^L}{\log \frac{\Psi^L_t\brr{\xi_t}}{\Psi^E\brr{\xi_t}}} \nonumber \\
~=~& 2 \eta_t \log \frac{\psi^L}{\psi^E} , \label{eq:il_conv_rate_diff}
\end{align}
where the approximation (i) is due to Eq.~\eqref{eq:eta-condition-bc}, and (ii) is due to Lemma~\ref{helper-lemma-cross}. Then, from~\eqref{eq:il_conv_rate_diff}, we have:
\begin{align*}
\frac{\partial \Delta_t}{\partial\psi^E} ~\approx~& - \frac{2 \eta_t}{\psi^E} ~<~ 0 , \text{ and } \\
\frac{\partial \Delta_t}{\partial\psi^L} ~\approx~& \frac{2 \eta_t}{\psi^L} ~>~ 0 .
\end{align*}
\end{proof}



        \section{Additional Details for Learner-Centric Experiments}\label{ap:experiment_details_nav}

\looseness-1In this appendix, we present additional experimental details for the synthetic navigation-based environments considered in Section \ref{sec:experiment_without_teacher}.

\subsection{Environment MDPs}

We first formally define the environment MDPs for the shortest path and TSP inspired environments described in Section \ref{sec:experiment_without_teacher}.

\begin{table}[b]
    \begin{minipage}[b]{0.5\textwidth}
    \centering
  \begin{tabular}{|c|}
    \hline
    Agent facing North \\
    \hline
    Agent facing South \\
    \hline
    Agent facing West \\
    \hline
    Agent facing East \\
    \hline
    Mud \\
    \hline
    Bomb \\
    \hline
    Goal \\
    \hline
    \end{tabular}
    \vspace{1mm}
    \caption{Shortest path task features}
    \label{tab:wsp_grid-rep}
  \end{minipage}
  \begin{minipage}[b]{0.5\textwidth}
  \centering
  \begin{tabular}{|c|}
    \hline
    Agent facing North \\
    \hline
    Agent facing South \\
    \hline
    Agent facing West \\
    \hline
    Agent facing East \\
    \hline
    Start \\
    \hline
    Goal \\
    \hline
    \end{tabular}
    \vspace{1mm}
    \caption{TSP task features}
    \label{tab:tsp_grid-rep}
    \end{minipage}
\end{table}

\paragraph{Shortest path environment.}
A task in the shortest path environment is represented by a grid-world containing the agent, goals, muds, and bombs. Each possible configuration of a grid-world, including the agent's location and direction, is associated with a state in the shortest path environment MDP, $\mathcal{M}_\mathrm{path}$.
The size of the state space sees a combinatorial growth with the size of the grid, corresponding to different ways of placing bombs/muds/goals. Hence, the state-space is intractably large to enumerate. The agent's action space consists of 3 actions, $\mathcal{A} = \{\texttt{move}, \texttt{left}, \texttt{right}\}$. The actions \texttt{left} or \texttt{right} changes the agent's direction accordingly. The agent moves one step forward in its current direction with the action \texttt{move}. The environment reward function $R^{E_{\mathrm{path}}}$ has a $-1$ reward value for each action performed by the agent. Reaching a goal cell has a $+10$ reward value. There is a reward value of $-1$ for encountering a cell with mud and a reward value of $-5$ for encountering a bomb.
Reaching a goal or a bomb ends the agent's episode. 

Each state $s$ is characterized by a feature mapping $\phi^{E_{\mathrm{path}}}(s)$ which encodes the agent's location and direction, as well as the position of bombs, muds, and goals in the grid-world. In our environment, we consider grid-worlds of size $6\times6$, and each cell in the grid has a binary feature vector of length $7$ as shown in Table \ref{tab:wsp_grid-rep}. The first $4$ features are a one-hot encoding representation of the agent's location and direction in the grid-world. The last 3 binary features represent the presence or absence of either a mud, bomb, or goal respectively at a cell. Consequently, the feature mapping
$\phi^{E_{\mathrm{path}}}(s)$ is of dimension $6\times6\times7$.

\paragraph{TSP environment.}
A task in the TSP environment is represented by a grid-world containing the agent and goal cells. Each possible configuration of a grid-world is associated with a state in the TSP environment MDP, $\mathcal{M}_{\mathrm{tour}}$, similar to $\mathcal{M}_{\mathrm{path}}$. The agent's action space $\mathcal{A} = \{\texttt{move}, \texttt{left}, \texttt{right}\}$ is defined the same as for $\mathcal{M}_{\mathrm{path}}$.
In this environment, the reward function $R^{E_{\mathrm{tour}}}$ has a $+10$ reward value for completion of a successful tour, i.e., arriving back at the initial location after having visited all the goals in the grid-world. Similar to the shortest path environment, there is a reward value of $-1$ for each action performed by the agent. The agent's episode ends on the completion of a successful tour or after a certain time horizon.

Each state $s$ in the TSP environment is characterized by a feature mapping $\phi^{E_{\mathrm{tour}}}(s)$ similar to the shortest path environment. We again consider grid-worlds of size $6\times6$, and each cell in the grid has a binary feature vector of length $6$ as shown in Table \ref{tab:tsp_grid-rep}. The first $4$ features are a one-hot encoding representation of the agent's location and direction in the grid-world. The next feature captures the starting cell of the agent, which signals the final point of the tour. The last binary feature represents the presence or absence of a goal at a given cell.
Hence, the feature mapping $\phi^{E_{\mathrm{tour}}}(s)$ is of dimension $6\times6\times6$.

\subsection{Dataset Generation}

Here, we outline the dataset generation process. We create separate training, validation, and test sets for both of our navigation environments. Further, optimal paths were computed for all the tasks in the training set for both environments. These are provided as demonstrations to the learner during the training phase. In the case of multiple optimal paths for a task, each optimal path was included as a unique demonstration.

\paragraph{Shortest path environment.}
For the shortest path navigation tasks, we sample grid-worlds containing several muds and bombs, both in the range $\{0, \dots, 12\}$. The agent's initial position and location of goals, muds, and bombs are all sampled at random without overlap. The training, validation, and test sets contain $100$, $10$, $30$ grid-worlds respectively for each combination of muds and bombs, leading to datasets of sizes $16900$, $1690$, and $5070$ respectively. Additionally, each dataset contains an equal percentage of grid-worlds with a single goal cell and with two goal cells. 

\paragraph{TSP environment.}

For the TSP navigation tasks, we sample grid-worlds containing goal cells in the range $\{2,\dots,4\}$. The agent's initial position and location of goals are sampled at random without overlap. The training, validation, and test sets contain $2000$, $100$, $500$ grid-worlds respectively for each unique number of goal cells in a task, leading to datasets of size $6000$, $1500$, and $300$ respectively.


\subsection{Teacher's Difficulty Score}

As explained in Sections \ref{sec:curr-teacher-algo} and \ref{sec:experiment_without_teacher}, for the learner-centric setting we define the teacher's difficulty $\Psi^E(\xi)$ using a task-specific difficulty score.

\paragraph{Shortest path environment.}
For the shortest path tasks we define the following difficulty score:
\begin{equation}
\Psi^E(\xi) = \frac{\#goals \times \#optimal\_paths}{optimal\_reward}.
\label{eq:wsp_domain_diff}
\end{equation}
Intuitively, the difficulty score in Eq.~(\ref{eq:wsp_domain_diff}) is proportional to the difficulty of a task as the greater the number of goals present and optimal paths, the more challenging the task is for the learner. Additionally, a higher optimal reward implies a shorter path to a goal that is less challenging for the learner.

\paragraph{TSP environment.}
For the TSP tasks we define the teacher's difficulty score as:
\begin{equation}
\Psi^E(\xi) = \frac{\#goals}{optimal\_reward - greedy\_gap},
\label{eq:tsp_domain_diff}
\end{equation}
where \emph{greedy gap} is defined as the difference in reward between the optimal tour and the greedy tour for the given task. In the greedy tour, the agent repeatedly navigates to the closest goal which has not been visited yet. Once all goals have been visited, the agent returns to its initial location. The greedy tour is not necessarily the optimal tour for a task. 

Following a similar intuition as before, we see that the difficulty score of Eq.~(\ref{eq:tsp_domain_diff}) is proportional to the difficulty of a task. The greater the number of goals and the lower the optimal reward, the greater the difficulty of the task for the learner. Further, tasks with a larger $greedy\_gap$ are more complex for the learner.

In both Eqs.~(\ref{eq:wsp_domain_diff}) and (\ref{eq:tsp_domain_diff}) the denominator for the training tasks are linearly transformed to make all values $\geq 1$.

\subsection{Scheduling Mechanism}

\begin{algorithm}[t]
    \caption{Scheduling Mechanism}
    \begin{algorithmic}[1]
        \State \textbf{Initialization:} parameters a, b and total training epochs $N$.
        \For{Epoch $e = 1,2,\dots,N$}
            \State Curriculum strategy computes a preference over all demonstrations $\Xi$.
            \State Scheduling size is computed as $X = \begin{cases} b|\Xi| + \frac{e}{aN}(1-b)|\Xi| & \text{if } e < aN \\
            |\Xi| & \text{otherwise}
            \end{cases}$
            \State The $X$ most preferred demonstrations are provided to the learner in random batches.
        \EndFor{}
    \end{algorithmic}
    \label{alg:scheduling.shortest.path}
\end{algorithm}

As commonly done in prior work \cite{weinshall2018curriculum,wu2020curricula} when training neural networks using curriculum learning, we incorporate randomization in the training process for our \cur~algorithm and its variants using a scheduling mechanism. Demonstrations of higher preference are prioritized at the beginning of training, while during later stages, all demonstrations are provided with uniform probability to the learner.

In our experiments, we use a linear scheduling mechanism \cite{wu2020curricula}, where the first training epoch includes a fraction $b$ of the total demonstrations. The number of demonstrations included grows linearly every subsequent epoch such that by the time a fraction $a$ of the total epochs are completed, all the demonstrations are included. Algorithm \ref{alg:scheduling.shortest.path} details the scheduling mechanism. The demonstrations in an epoch are provided to the learner in randomly ordered batches. In our experiments we set $a = 0.8$ and $b = 0.5$.

\subsection{Learner model}

\paragraph{Training hyperparameters.}
\looseness-1For both navigation environments, the learners were trained for 40 epochs with an initial learning rate of 0.01 and a batch size of 32 demonstrations. The learning rate was decayed by a factor of 0.5 after every 500 batches of demonstrations.
The learning rate decay rule ensures the learning rate is consistent across the different curriculum algorithms since \cur~and its variants utilize a different number of training tasks in each epoch due to the scheduling mechanism. Our models were trained on Nvidia Tesla V100 GPUs.

\paragraph{Network Architecture}

\begin{table*}[t]
  \centering
  \begin{tabular}{|c||c|}
    \hline
    & \textbf{Input feature mapping} $6\times6\times d$\\
    \cline{2-2}
    \multirow{2}{*}{Convolution} & Conv2D, kernel size = 3, padding = 1, $d \rightarrow$ 32\\
    & ReLU \\
    \hline
    \multirow{6}{*}{Residual Block 1} & Conv2D, kernel size = 3, padding 1, 32 $\rightarrow$ 32\\
 & ReLU\\
 & Conv2D, kernel size = 3, padding 1, 32 $\rightarrow$ 32\\
 & ReLU\\
 & Conv2D, kernel size = 3, padding 1, 32 $\rightarrow$ 32\\
 & ReLU\\
    \hline
    \multirow{6}{*}{Residual Block 2} & Conv2D, kernel size = 3, padding 1, 32 $\rightarrow$ 32\\
 & ReLU\\
 & Conv2D, kernel size = 3, padding 1, 32 $\rightarrow$ 32\\
 & ReLU\\
 & Conv2D, kernel size = 3, padding 1, 32 $\rightarrow$ 32\\
 & ReLU\\
    \hline
    Fully Connected & Linear, $6\times 6 \times 32$ $\rightarrow$ $512$\\
    \hline
    Fully Connected & Linear, $512$ $\rightarrow$ $256$\\
    \hline
    Fully Connected & Linear, $256$ $\rightarrow$ $3$\\
    \hline
  \end{tabular}
  \vspace{1mm}
  \caption{Network architecture}
  \label{tab:task_network_arch}
\end{table*}

The learner's neural network takes as input the feature mapping $\phi^E(s)$ of a state $s$. The dimension of the feature mapping is given by $6\times6\times d$, where $d=7$ for states in the shortest path navigation environment and $d=6$ for states in the TSP navigation environment. In turn, the learner's neural network outputs a vector of size 3, which provides a probability distribution over actions after the softmax function is applied. The architecture of the neural network is provided in Table \ref{tab:task_network_arch}.

\begin{figure*}[t]
\begin{minipage}{0.49\textwidth}    
        \centering
        \includegraphics[width=0.6\textwidth]{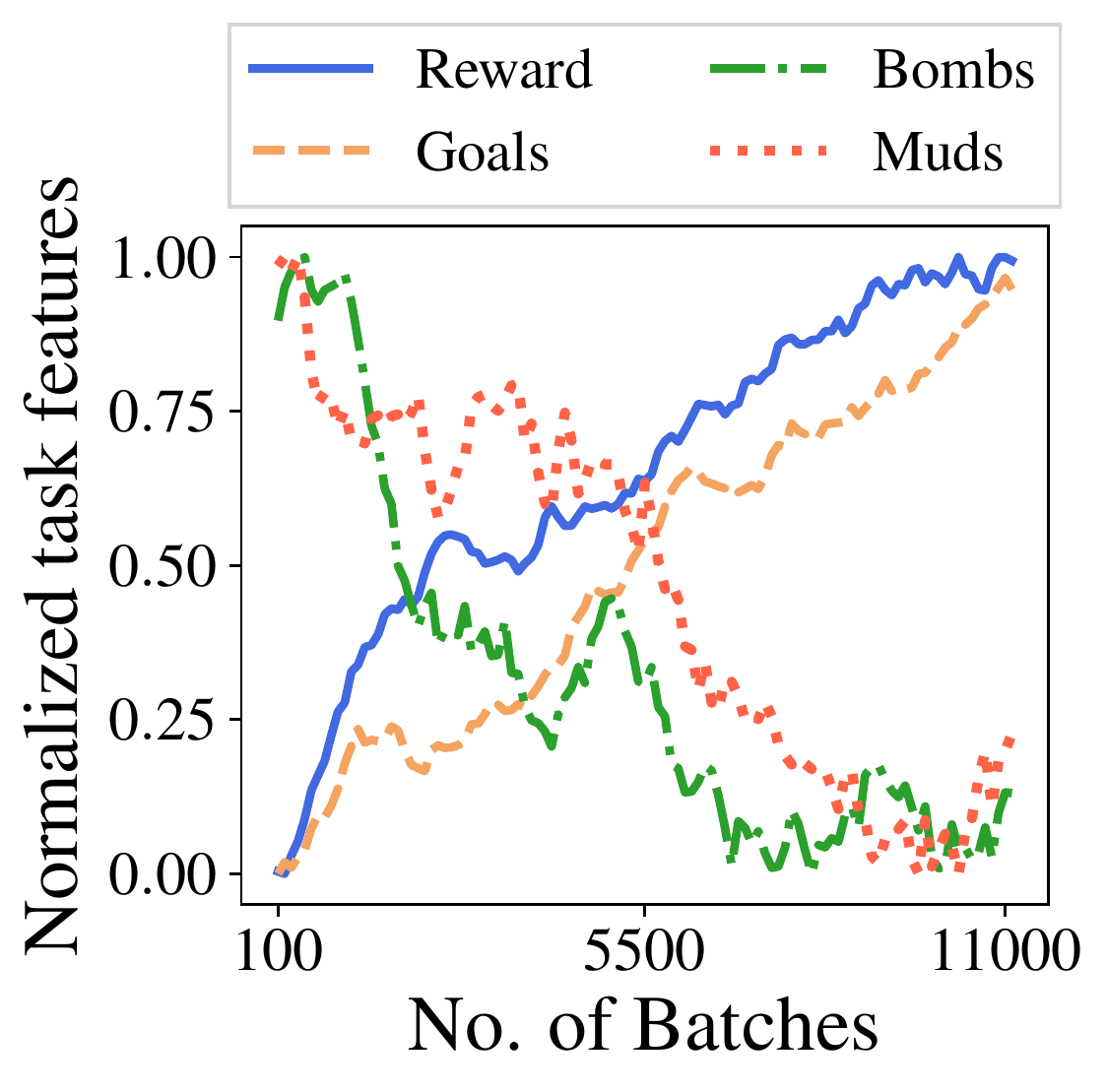}
    \caption{Shortest path environment curriculum visualization.}
    \label{fig:wsp_curriculum}
    \end{minipage}
    \qquad
    \begin{minipage}{0.49\textwidth}
        \centering
        \includegraphics[width=0.6\textwidth]{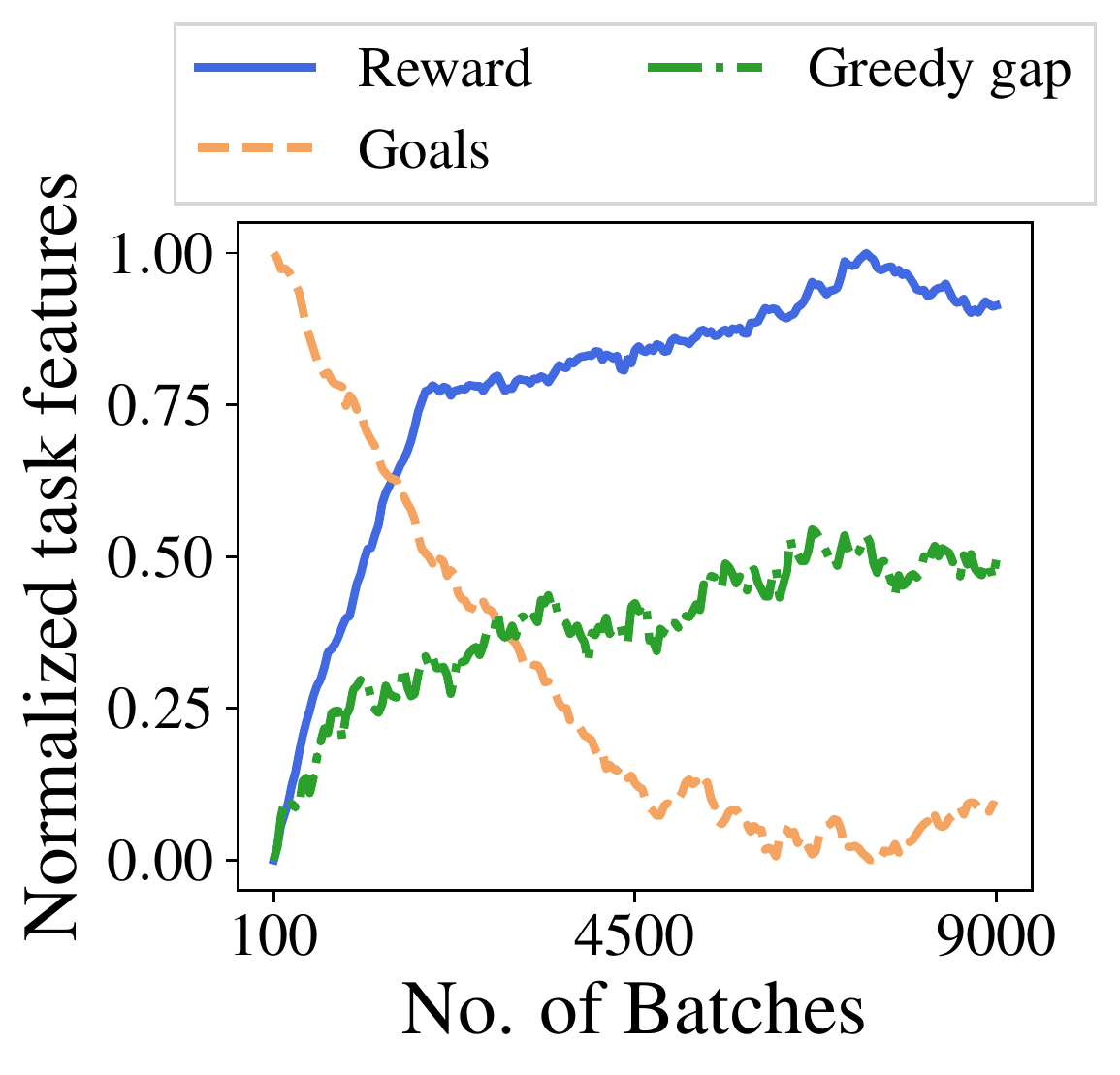}
    \caption{TSP environment curriculum visualization.}
    \label{fig:tsp_curriculum}
    \end{minipage}
\end{figure*}

\subsection{Curriculum Visualization}

In addition to the results presented in Section \ref{sec:experiment_without_teacher}, we visualize the curriculum generated by our \cur~algorithm for the shortest path and TSP environments in Figs.~\ref{fig:wsp_curriculum} and \ref{fig:tsp_curriculum} respectively. In Figs.~\ref{fig:wsp_curriculum} and \ref{fig:tsp_curriculum}, the y-axis represents different features of the tasks provided to the learner, normalized in the range $[0,1]$ and calculated as a moving average over the previous $100$ batches. The x-axis represents the number of demonstrations provided to the learner.

\paragraph{Shortest path environment.}
Fig.~\ref{fig:wsp_curriculum} shows that at the beginning of training, the \cur~algorithm picks tasks with a fewer goals and a higher number of muds/bombs. We hypothesize that this teaches the agent how to avoid muds and bombs while navigating to a goal. During later stages of training, \cur~picks tasks with higher optimal rewards and a greater number of goals. Here we believe the agent is taught how to identify the path with maximum reward among all paths that lead to a goal. Essentially the learner is first taught the general navigation task followed by the most difficult concept of deciding the optimal path.

\paragraph{TSP environment.}

Fig.~\ref{fig:tsp_curriculum} illustrates that at the beginning of training \cur~selects tasks with a greater number of goals, but with a low greedy gap. This would teach the learner the general navigation problem of visiting all goals. As training progresses, \cur~picks tasks with a greater greedy gap. We hypothesize that these tasks teach the learner the most difficult concept of planning the optimal tour.
        
    }
}

\clearpage
\bibliographystyle{unsrt}
\bibliography{main}

\end{document}